\newtheorem{lemma}{Lemma}[section]
\DeclareMathOperator*{\argmax}{argmax}
\DeclareMathOperator*{\argmin}{argmin}
\newcommand{\msim}{\mathcal{M}_{s}}
\newcommand{\mreal}{\mathcal{M}_{t}}
\DeclareMathOperator{\defd}{:=}
\newcommand{\xset}{\ensuremath{\mathcal{X}}}    
\newcommand{\gaifo}{\textsc{gai}f\textsc{o}}
\newcommand{\garat}{\textsc{garat}}
\newcommand{\gail}{\textsc{gail}}
\newcommand{\gat}{\textsc{gat}}
\newcommand{\ifo}{\textsc{i}f\textsc{o}}
\newcommand{\rarl}{\textsc{rarl}}
\title{An Imitation from Observation Approach to Transfer Learning with Dynamics Mismatch}
\author{%
  Siddharth Desai\textsuperscript{\textsection} \\
  Department of Mechanical Engineering\\
  The University of Texas at Austin\\
  \texttt{sidrdesai@utexas.edu} \\
  \And
  Ishan Durugkar \textsuperscript{\textsection} \\
  Department of Computer Science\\
  The University of Texas at Austin\\
  \texttt{ishand@cs.utexas.edu}
  \And
  Haresh Karnan \textsuperscript{\textsection}\\
  Department of Mechanical Engineering\\
  The University of Texas at Austin\\
  \texttt{haresh.miriyala@utexas.edu}
   \And
   Garrett Warnell \\
   Army Research Laboratory \\
   \texttt{garrett.a.warnell.civ@mail.mil} \\
   \And
   Josiah P. Hanna \thanks{to be joining the Computer Sciences department at the University of Wisconsin -- Madison} \\
   School of Informatics \\
   The University of Edinburgh \\
   \texttt{josiah.hanna@ed.ac.uk} \\
   \And
   Peter Stone \\
   Department of Computer Science\\
   The University of Texas at Austin\\
   and Sony AI \\
   \texttt{pstone@cs.utexas.edu} \\
}
\begin{document}


\maketitle

\begingroup\renewcommand\thefootnote{\textsection}
\footnotetext{Equal contribution}

\begin{abstract}
    We examine the problem of transferring a policy learned in a source environment to a target environment with different dynamics, particularly in the case where it is critical to reduce the amount of interaction with the target environment during learning. This problem is particularly important in sim-to-real transfer because simulators inevitably model real-world dynamics imperfectly.
    In this paper, we show that one existing solution to this transfer problem---{\em grounded action transformation}---is closely related to the problem of {\em imitation from observation} (IfO):
    %
    learning behaviors that mimic the observations of behavior demonstrations.
    After establishing this relationship, we hypothesize that recent state-of-the-art approaches from the IfO literature can be effectively repurposed for grounded transfer learning. To validate our hypothesis we derive a new algorithm---generative adversarial reinforced action transformation (\garat{})---based on adversarial imitation from observation techniques. We run experiments in several domains with mismatched dynamics, and find that agents trained with \garat{} achieve higher returns in the target environment compared to existing black-box transfer methods.
\end{abstract}

\section{Introduction}

Transfer learning with dynamics mismatch refers to using experience in a source environment to more efficiently learn control policies that perform well in a target environment, where the two environments differ only in their transition dynamics.
For example, if the friction coefficient in the source and target environments is sufficiently different it might cause the action of placing a foot on the ground to work well in one environment, but cause the foot to slip in the other.
One possible application of such transfer is where the source environment is a simulator and the target environment is a robot in the real world, called sim-to-real.
In sim-to-real scenarios, source environment (simulator) experience is readily available, but target environment (real world) experience is expensive.
%
%
%
Sim-to-real transfer has been used effectively to learn a fast humanoid walk \cite{hanna2017grounded}, dexterous manipulation \cite{openai2019solving, R21, R22, R23, R24, R25, R26}, and agile locomotion skills \cite{peng2020learning}.
In this work, we focus on the paradigm of simulator grounding \cite{farchy2013gsl, hanna2017grounded, chebotar2019closing}, which modifies the source environment's dynamics to more closely match the target environment dynamics using a relatively small amount of target environment data.
Policies then learned in such a grounded source environment transfer better to the target environment.

Separately, the machine learning community has also devoted attention to imitation learning \cite{bakker1996robot}, i.e.\ the problem of learning a policy to mimic demonstrations provided by another agent.
%
%
In particular, recent work has considered the specific problem of {\em imitation from observation} (IfO) \cite{liu2018imitation}, in which an imitator mimics the expert's behavior without knowing which actions the expert took, only the outcomes of those actions (i.e.\ state-only demonstrations).
While the lack of action information presents an additional challenge, recently-proposed approaches have suggested that this challenge may be addressable \cite{torabi2018behavioral, Torabi_2019}.

In this paper, we show that a particular grounded transfer technique that has been shown to successfully accomplish sim-to-real transfer, called {\em grounded action transformation} (\gat{}) \cite{hanna2017grounded}, can be seen as a form of IfO.
We therefore hypothesize that recent, state-of-the-art approaches for addressing the IfO problem might also be effective for grounding the source environment, leading to improved transfer.
Specifically, we derive a distribution-matching objective similar to ones used in adversarial approaches for generative modeling \cite{goodfellow2014generative}, imitation learning \cite{ho2016gail}, and IfO \cite{torabi2018generative} with considerable empirical success.
Based on this objective, we propose a novel algorithm, Generative Adversarial Reinforced Action Transformation (\garat{}), to ground the source environment by reducing the distribution mismatch between the source and target environments.

Our experiments confirm our hypothesis by showing that \garat{} reduces the difference in the dynamics between two environments more effectively than \gat{}.
Moreover, our experiments show that, in several domains, this improved grounding translates to better transfer of policies from one environment to the other.

The contributions of this paper are as follows: {\em (1)} we show that learning the grounded action transformation can be seen as an \emph{IfO} problem, {\em (2)} we derive a novel adversarial imitation learning algorithm, \garat{}, to learn an action transformation policy for transfer learning with dynamics mismatch, and {\em (3)} we experimentally evaluate the efficacy of \garat{} for transfer with dynamics mismatch.

\section{Background}
We begin by introducing notation, reviewing the transfer learning with dynamics mismatch problem formulation, and describing the action transformation approach for sim-to-real transfer.
We also provide a brief overview of imitation learning and imitation from observation.

\subsection{Notation}
We consider here sequential decision processes formulated as Markov decision processes (MDPs) \cite{sutton2018reinforcement}. An MDP $\mathcal{M}$ is a tuple $\langle \sset, \aset, \Rfun, \Tfun, \D, \rho_0 \rangle$ consisting of a set of states, $\sset$; a set of actions, $\aset$; a reward function, $\Rfun: \sset \times \aset \times \sset \longmapsto \Delta([\rmin, \rmax])$  (where $\Delta([\rmin, \rmax])$ denotes a distribution over the interval $[\rmin, \rmax] \subset \mathbb{R}$);
a discount factor, $\D \in [0, 1)$; a transition function, $\Tdef$; and an initial state distribution, $\rho_0: \Delta(\sset)$.
An RL agent uses a policy $\p: \sset \longmapsto \Delta(\aset)$ to select actions in the environment.
In an environment with transition function $\Tfun \in \Tset$, the agent aims to learn a policy $\p \in \pset$ to maximize its expected discounted return  $\mathbb{E}_{\p, \Tfun} [G_0] = \mathbb{E}_{\p, \Tfun} \left[ \sum_{t=0}^{\infty} \D^{t} \rt{t} \right]$, where $\rt{t} \sim \Rfun(s_t, a_t, s_{t+1})$, $s_{t+1} \sim \Tfun(s_t, a_t)$, $a_t \sim \p(s_t)$, and $s_0 \sim \rho_0$.

Given a fixed $\pi$ and a specific transition function $\Tfun_q$, the marginal transition distribution is $\rho_q(s, a, s') \defd (1 - \D) \p(a|s) \Tfun_q(s'|s, a) \sum_{t=0}^\infty \D^t p(s_t=s|\p, \Tfun_q)$ where
$p(s_t=s|\p, \Tfun_q)$ is the probability of being in state $s$ at time $t$.
The marginal transition distribution is the probability of being in state $s$ marginalized over time $t$, taking action $a$ under policy $\p$, and ending up in state $s'$ under transition function $\Tfun_q$ (laid out more explicitly in Appendix \ref{app:marginal}).
%
%
We can denote the expected return under a policy $\p$ and a transition function $\Tfun_q$ in terms of this marginal distribution as:
\begin{align} \label{eqn:marg_return}
    \mathbb{E}_{\p, q} [ G_0 ] = \frac{1}{(1- \D)}\sum_{s, a, s'} \rho_{q}(s, a, s') \Rfun(s'|s, a)
\end{align}

\subsection{Transfer Learning with Dynamics Mismatch and Grounded Action Transformation} \label{sec:s2r}

Let $\Tfun_{s}, \Tfun_{t} \in \Tset$ be the transition functions for two otherwise identical MDPs, $\msim$ and $\mreal$, representing the source and target environments respectively.
Transfer learning with dynamics mismatch, as opposed to transfer learning in general, aims to train an agent policy to maximize return in $\mreal$ with limited trajectories from $\mreal$, and as many as needed in $\msim$.
%
%

The work presented here is specifically concerned with a particular class of approaches used in sim-to-real transfer known as simulator grounding approaches \cite{allevato2019tunenet, chebotar2019closing,  farchy2013gsl}.
Here the source environment is the simulator and the target environment is the real world.
These approaches use some interactions with the target environment to \emph{ground} the source environment dynamics to more closely match the target environment dynamics.
Because it may sometimes be difficult or impossible to modify the source environment itself (when the source environment is a black-box simulator, for example), the recently-proposed grounded action transformation (\gat{}) approach \cite{hanna2017grounded} seeks to instead induce grounding by modifying the agent's actions before using them in the source environment.
This modification is accomplished via an action transformation function $\p_{g}: \sset \times \aset  \longmapsto \Delta(\aset)$ that takes as input the state and action of the agent, and produces an action to be presented to the source environment.
From the agent's perspective, composing the action transformation with the source environment changes the source environment's transition function.
We call this modified source environment the {\em grounded} environment, and its transition function is given by
\begin{align} \label{eqn:trans_g}
\Tfun_{g}(s'|s, a) &= \sum_{\Tilde{a} \in \aset} \Tfun_{s}(s'|s, \Tilde{a})\p_{g}(\Tilde{a}|s, a) 
\end{align}
The action transformation approach aims to learn function $\p_g \in \pset_g$ such that the resulting transition function $\Tfun_{g}$ is as close as possible to $\Tfun_{t}$.
We denote the marginal transition distributions in the source and target environments by $\rho_{s}$ and $\rho_{t}$ respectively, and $\rho_g \in \mathcal{P}_{g}$ for the grounded environment.
\gat{} learns a model of the target environment dynamics $\hat{\Tfun}_{t}(s'|s, a)$, an inverse model of the source environment dynamics $\hat{\Tfun}_{s}^{-1}(a|s, s')$, and uses the composition of the two as the action transformation function, i.e.\ $\p_g(\Tilde{a}|s, a) = \hat{\Tfun}_{s}^{-1}(\Tilde{a}|s, \hat{\Tfun}_{t}(s'|s, a))$.
%



\subsection{Imitation Learning} \label{sec:il}
In parallel to advances in sim-to-real transfer, the machine learning community has also made considerable progress on the problem of imitation learning.
Imitation learning \cite{bakker1996robot, ross2011reduction, schaal1997learning} is the problem setting where an agent tries to mimic trajectories $\{\xi_0, \xi_1, \ldots\}$ where each $\xi$ is a demonstrated trajectory $\{(s_0, a_0), (s_1, a_1), \ldots\}$ induced by an expert policy $\p_{exp}$. 

Various methods have been proposed to address the imitation learning problem.
Behavioral cloning \cite{bain1995framework} uses the expert's trajectories as labeled data and uses supervised learning to recover the maximum likelihood policy.
%
Another approach instead relies on reinforcement learning to learn the policy, where the required reward function is recovered using inverse reinforcement learning (IRL) \cite{ng2000algorithms}.
IRL aims to recover a reward function under which the demonstrated trajectories would be optimal.
A related setting to learning from state-action demonstrations is the imitation from observation (IfO) \cite{liu2018imitation, pavse2019ridm, torabi2018behavioral, torabi2018generative} problem.
Here, an agent observes an expert's state-only trajectories $\{\zeta_0, \zeta_1, \ldots\}$ where each $\zeta$ is a sequence of states $\{s_0, s_1, \ldots\}$.
The agent must then learn a policy $\p(a|s)$ to imitate the expert's behavior, without being given labels of which actions to take.

\section{\gat{} as Imitation from Observation} \label{sec:s2r_ifo}
We now show that the underlying problem of \gat{}---i.e., learning an action transformation for sim-to-real transfer---can also been seen as an IfO problem.
%
Adapting the definition by \citet{liu2018imitation}, an IfO problem is a sequential decision-making problem
where the policy imitates state-only trajectories $\{\zeta_0, \zeta_1, \ldots\}$ produced by a Markov process, with no information about what actions generated those trajectories.
To show that the action transformation learning problem fits this definition, we must show that it {\em (1)} is a sequential decision-making problem and {\em (2)} aims to imitate state-only trajectories produced by a Markov process, with no information about what actions generated those trajectories.
%

Starting with {\em (1)}, it is sufficient to show that the action transformation function is a policy in an MDP \cite{puterman1990markov}.
This action transformation MDP can be seen clearly if we combine the target environment MDP and the fixed agent policy $\p$.
Let the joint state and action space $\xset := \sset \times \aset$ with $x \defd (s, a) \in \xset$ be the state space of this new MDP.
The combined transition function is $\Tfun^x_{s}(x'|x, \Tilde{a}) = \Tfun_{s}(s'|s, \Tilde{a}) \p(a'|s')$, where $x'=(s',a')$, and initial state distribution is $\rho^x_0(x) = \rho_0(s)\p(a|s)$.
For completeness, we consider a reward function $R^x: \xset \times \aset \times \xset \longmapsto \Delta([\rmin, \rmax])$ and discount factor $\gamma_x \in [0, 1)$, which are not essential for an IfO problem.
With these components, the action transformation environment is an MDP $\langle\xset, \aset, R^{x}, \Tfun^x_{s}, \gamma_x, \rho_0^x \rangle$.
The action transformation function $\p_g(\Tilde{a}|s, a)$, now $\p^x_g(\Tilde{a}|x)$, is then clearly a mapping from states to a distribution over actions, i.e.\ it is a policy in an MDP.
%
%
Thus, the action transformation learning problem is a sequential decision-making problem.

We now consider the action transformation objective to show {\em (2)}.
When learning the action transformation policy, we have trajectories $\{\tau_0, \tau_1, \ldots\}$, where each trajectory $\tau=\{(s_0, a_0 \sim \p(s_0)), (s_1, a_1 \sim \p(s_1)), \ldots\}$ is obtained by sampling actions from agent policy $\p$ in the target environment.
Re-writing $\tau$ in the above MDP, $\tau=\{x_0, x_1, \ldots\}$.
If an expert action transformation policy $\p^*_g \in \pset_g$ is capable of mimicking the dynamics of the target environment, $\Tfun^x_{t}(x'|x) = \sum_{\Tilde{a} \in \aset}\Tfun^x_{s}(x'|x, \Tilde{a}) \p^*_{g}(\Tilde{a}|x)$,
then we can consider the above trajectories to be produced by a Markov process with dynamics $\Tfun^x_{s}(x'|x, \Tilde{a})$ and policy $\p^*_g(\Tilde{a}|x)$.
%
%
The action transformation aims to imitate the state-only trajectories $\{\tau_0, \tau_1, \ldots\}$ produced by a Markov process, with no information about what actions generated those trajectories.

The problem of learning the action transformation thus satisfies the conditions we identified above, and so it is an IfO problem.

\section{Generative Adversarial Reinforced Action Transformation} \label{sec:adversarial grounding}

\begin{algorithm}[b]
\caption{GARAT}
\label{alg}
\SetAlgoLined
\KwIn{Target environment with $\Tfun_{t}$, source environment with $\Tfun_{s}$, number of update steps $N$}\;
Agent policy $\p$ with parameters $\eta$\ , pretrained in source environment; \\
Initialize action transformation policy $\p_g$ with parameters $\theta$\; \\
Initialize discriminator $D_{\phi}$ with parameters $\phi$\; \\
\While{performance of policy $\p$ in target environment not satisfactory}{
    Rollout policy $\p$ in target environment to obtain trajectories $\{\tau_{t, 1}, \tau_{t, 2}, \ldots\}$\; \\  
    \For{$i=0,1,2, \ldots N$}{
    Rollout Policy $\pi$ in grounded source environment and obtain trajectories $\{\tau_{g, 1}, \tau_{g, 2}, \ldots\}$\; \\ 
    Update parameters $\phi$ of $D_{\phi}$ using gradient descent to minimize \\ $-\left(\mathbb{E}_{\tau_{g}}[\log (D_{\phi}(s, a, s'))] + \mathbb{E}_{\tau_{t}}[ \log(1 - D_{\phi}(s, a, s'))\right)$\; \\
    Update parameters $\theta$ of $\p_g$ using policy gradient with reward $-[\log D_{\phi}(s, a, s')]$\;
    }
    Optimize parameters $\eta$ of $\p$ in source environment grounded with action transformer $\p_g$\;
}
\end{algorithm}


%
The insight above naturally leads to the following question: {\em if learning an action transformation for transfer learning is equivalent to IfO, might recently-proposed IfO approaches lead to better transfer learning approaches?}
To investigate the answer, we derive a novel generative adversarial approach inspired by \gaifo{}\cite{torabi2018generative} that can be used to train the action transformation policy using IfO.
A source environment grounded with this action transformation policy can then be used to train an agent policy which can be expected to transfer effectively to a given target environment.
We call our approach generative adversarial reinforced action transformation (\garat{}), and Algorithm \ref{alg} lays out its details.

The rest of this section details our derivation of the objective used in \garat{}.
%
First, in Section \ref{sec:atirl}, we formulate a procedure for action transformation using a computationally expensive IRL step to extract a reward function and then learning an action transformation policy based on that reward.
%
%
Then, in Section \ref{sec:theory}, we show that this entire procedure is equivalent to directly reducing the marginal transition distribution discrepancy between the target environment and the grounded source environment.
This is important, as recent work \cite{goodfellow2014generative, ho2016gail,torabi2018generative} has shown that adversarial approaches are a promising algorithmic paradigm to reduce such discrepancies.
Thus, in Section \ref{sec:garat}, we explicitly formulate a generative adversarial objective upon which we build the proposed approach.

\subsection{Action Transformation Inverse Reinforcement Learning}
\label{sec:atirl}




We first lay out a procedure to learn the action transformation policy by extracting the appropriate cost function, which we term action transformation IRL (ATIRL).
We use the cost function formulation in our derivation, similar to previous work \cite{ho2016gail, torabi2018generative}.
ATIRL aims to identify a cost function such that the observed target environment transitions yield higher return than any other possible transitions.
We consider the set of cost functions $\mathcal{C}$ as all functions $\mathbb{R}^{\sset \times \aset \times \sset} = \{c: \sset \times \aset \times \sset \longmapsto \mathbb{R}\}$.

\begin{equation}
    \mathtt{ATIRL}_{\psi}(\Tfun_{t}) \defd \argmax_{c \in \mathcal{C}} -\psi(c) + \left(\min_{\p_{g} \in \pset_{g}} \mathbb{E}_{\rho_g}[c(s, a, s')] \right) - \mathbb{E}_{\rho_{t}}[c(s, a, s')]
    \label{eqn:ATIRL}
\end{equation}
where $\psi: \mathbb{R}^{\sset \times \aset \times \sset} \longmapsto \overline{\mathbb{R}}$ is a (closed, proper) convex reward function regularizer, and $\overline{\mathbb{R}}$ denotes the extended real numbers $\mathbb{R} \bigcup \{\infty\}$.
This regularizer is used to avoid overfitting the expressive set $\mathcal{C}$.
Note that $\p_g$ influences $\rho_g$ (Equation \ref{eqn:rho_g} in Appendix \ref{app:marginal}) and $\Tfun_{t}$ influences $\rho_{t}$.
Similar to \gaifo{}, we do not use causal entropy in our ATIRL objective due to the surjective mapping from $\pset_{g}$ to $\mathcal{P}_{g}$.
%

The action transformation then uses this per-step cost function as a reward function in an RL procedure: $\mathtt{RL}(c) \defd \argmin_{\p_g \in \pset_{g}} \mathbb{E}_{\rho_{g}} [c(s, a, s')]$.
%
%
We assume here for simplicity that there is an action transformation policy that can mimic the target environment dynamics perfectly.
That is, there exists a policy $\p_{g} \in \pset_{g}$, such that $\Tfun_{g}(s'|s, a) = \Tfun_{t}(s'|s, a) \forall s \in \sset, a \in \aset$. 
%
We denote the RL procedure applied to the cost function recovered by ATIRL as $\mathtt{RL} \circ \mathtt{ATIRL}_{\psi}(\Tfun_{t})$.

\subsection{Characterizing the Policy Induced by ATIRL} \label{sec:theory}
This section shows that it is possible to bypass the ATIRL step and learn the action transformation policy directly from data.
We show that $\psi$-regularized $\mathtt{RL} \circ \mathtt{ATIRL}_{\psi}(\Tfun_{t})$ implicitly searches for policies that have a marginal transition distribution close to the target environment's, as measured by the convex conjugate of $\psi$, which we denote as $\psi^*$.
As a practical consequence, we will then be able to devise a method for minimizing this divergence through the use of generative adversarial techniques in Section \ref{sec:garat}.
But first, we state our main theoretical claim:
\begin{restatable}{theorem}{thmgarat}
 \label{thm:garat}
$\mathtt{RL} \circ \mathtt{ATIRL}_{\psi}(\Tfun_{t})$ and $\argmin_{\p_{g}} \psi^*(\rho_{g} - \rho_{t})$ induce policies that have the same marginal transition distribution, $\rho_{g}$.
\end{restatable}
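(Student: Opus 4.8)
The plan is to follow the convex-duality argument of \cite{ho2016gail}, lifting both procedures from the space of action-transformation policies $\pset_g$ onto the space $\mathcal{P}_g$ of achievable marginal transition distributions. First I would exploit the surjectivity of the map $\pset_g \longmapsto \mathcal{P}_g$ (already invoked in Section \ref{sec:atirl} to justify dropping causal entropy): since the claim only concerns the induced $\rho_g$, I may replace every $\min_{\p_g \in \pset_g}$ by $\min_{\rho_g \in \mathcal{P}_g}$ and reason entirely in occupancy-measure space, where the statement ``induce the same marginal transition distribution'' is well posed. I would record that $\mathcal{P}_g$ is a convex, compact polytope (carved out by the Bellman flow constraints), and that the perfect-grounding assumption of Section \ref{sec:atirl} places $\rho_{real} \in \mathcal{P}_g$, so the relevant objectives are finite and their optima attained.

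Next I would introduce the Lagrangian $L(\rho_g, c) \defd -\psi(c) + \mathbb{E}_{\rho_g}[c(s, a, s')] - \mathbb{E}_{\rho_{real}}[c(s, a, s')]$. With this notation, $\mathtt{ATIRL}_{\psi}(\Tfun_{real}) = \argmax_{c \in \mathcal{C}} \min_{\rho_g \in \mathcal{P}_g} L(\rho_g, c)$, and $\mathtt{RL} \circ \mathtt{ATIRL}_{\psi}(\Tfun_{real})$ returns the $\rho_g$ attaining that inner minimum at a maximizing $c$. For the other side, the definition of the convex conjugate gives $\psi^*(\rho_g - \rho_{real}) = \sup_{c \in \mathcal{C}} \big[ \langle \rho_g - \rho_{real}, c \rangle - \psi(c) \big] = \max_{c \in \mathcal{C}} L(\rho_g, c)$, so that $\argmin_{\p_g} \psi^*(\rho_g - \rho_{real}) = \argmin_{\rho_g \in \mathcal{P}_g} \max_{c \in \mathcal{C}} L(\rho_g, c)$. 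Thus the two sides of the theorem are exactly the inner optimizers of $\max_c \min_{\rho_g} L$ and of $\min_{\rho_g} \max_c L$.

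The crux is then a minimax swap. The function $L$ is affine (hence convex) in $\rho_g$ and concave in $c$ (affine minus the convex regularizer $\psi$); $\mathcal{P}_g$ is convex and compact and $\mathcal{C}$ is convex, so Sion's minimax theorem yields $\max_c \min_{\rho_g} L = \min_{\rho_g} \max_c L$ together with the existence of a saddle point $(\rho_g^*, c^*)$. I would close the loop with a standard saddle-point argument: for any $c^* \in \mathtt{ATIRL}_{\psi}(\Tfun_{real})$, saddle optimality gives $\rho_g^* \in \argmin_{\rho_g} L(\cdot, c^*) = \mathtt{RL}(c^*)$ while simultaneously $\rho_g^* \in \argmin_{\rho_g} \max_c L = \argmin_{\rho_g} \psi^*(\rho_g - \rho_{real})$; running these inclusions in both directions shows the two induced sets of marginal transition distributions coincide, which is the claim.

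The main obstacle I anticipate is making the minimax swap fully rigorous rather than formal: verifying the hypotheses of Sion's theorem, in particular compactness of $\mathcal{P}_g$ and some coercivity or compactness control in $c$ (obtained either from the regularizer $\psi$ or by restricting to a compact sublevel set of $L$) so that suprema over $\mathcal{C}$ are attained. A secondary delicacy is matching the \emph{argmin sets} rather than merely the optimal values, since the policy-to-occupancy map is only surjective and the optimizing $c^*$ and $\rho_g^*$ need not be unique; here I would lean on the saddle-point characterization to show every optimizer of one procedure is an optimizer of the other, and confirm that the perfect-grounding assumption is precisely what guarantees feasibility of $\rho_{real}$ and attainment of the saddle value.
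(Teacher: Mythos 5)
Your proposal is correct and follows essentially the same route as the paper's own proof: lifting both procedures to the space of marginal transition distributions via surjectivity (the paper's Lemmas \ref{lem:overline_equi} and \ref{lem:psi_equi}), expressing $\psi^*$ through the Lagrangian $L(\rho_g,c)$, and invoking compactness/convexity of $\mathcal{P}_g$ (the paper's Lemma \ref{lem:compact}) to justify a minimax swap and saddle-point argument (the paper's Lemma \ref{lem:overline_psi_equi}). The only differences are organizational---you handle the policy-to-distribution lifting in one step rather than two separate lemmas, and you name Sion's theorem explicitly where the paper appeals to minimax duality---plus somewhat greater care about attainment and non-uniqueness of optimizers, which strengthens rather than alters the argument.
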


To reiterate, the agent policy $\p$ is fixed.
So the only decisions affecting the marginal transition distributions are of the action transformation policy $\p_g$.
We can now state the following proposition:

\begin{restatable}{proposition}{propunique}
\label{prop:unique}
For a given $\rho_g$ generated by a fixed policy $\p$, $\Tfun_{g}$ is the  only transition function whose marginal transition distribution is $\rho_{g}$.
\end{restatable}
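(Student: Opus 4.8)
The plan is to exploit the fact that the marginal transition distribution factorizes through the discounted state-visitation distribution, so that the transition function can be recovered algebraically from $\rho_g$ wherever the policy actually visits. Writing $d_q(s) \defd (1-\D)\sum_{t=0}^\infty \D^t p(s_t = s \mid \p, \Tfun_q)$ for the discounted state-visitation distribution induced by the fixed $\p$ and a transition function $\Tfun_q$, the definition of the marginal transition distribution rearranges cleanly as
\begin{equation}
\rho_q(s,a,s') = d_q(s)\,\p(a\mid s)\,\Tfun_q(s'\mid s,a).
\label{eqn:unique_fact}
\end{equation}
This factorization is the backbone of the argument: it separates the three sources of randomness, namely where the agent is, what the fixed policy does, and where the grounded simulator sends it.

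First I would marginalize Equation \eqref{eqn:unique_fact} over the successor state $s'$. Since $\Tfun_q(\cdot \mid s,a)$ is a probability distribution, $\sum_{s'}\Tfun_q(s' \mid s,a) = 1$, so $\sum_{s'}\rho_g(s,a,s') = d_g(s)\,\p(a\mid s)$, a quantity that is completely determined once $\rho_g$ is given. Next, on every $(s,a)$ pair for which this state-action visitation is strictly positive, I would divide Equation \eqref{eqn:unique_fact} by it to obtain
\begin{equation}
\Tfun_g(s'\mid s,a) = \frac{\rho_g(s,a,s')}{\sum_{s''}\rho_g(s,a,s'')}.
\label{eqn:unique_recover}
\end{equation}
The right-hand side depends only on the given $\rho_g$, so $\Tfun_g$ is pinned down by $\rho_g$ on the support of the visitation. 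Consequently, any transition function $\Tfun'$ that induces the same marginal, i.e.\ $\rho' = \rho_g$, must satisfy Equation \eqref{eqn:unique_recover} as well, and therefore agrees with $\Tfun_g$ there, which gives the claimed uniqueness.

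The delicate step is handling $(s,a)$ pairs that are never visited under $\p$ and $\Tfun_g$, where the denominator in Equation \eqref{eqn:unique_recover} vanishes and $\rho_g$ places no constraint whatsoever on the transition function; on such pairs two distinct transition functions can share the marginal $\rho_g$. I expect this to be the main obstacle, and I would resolve it by stating uniqueness on the reachable set (equivalently, identifying transition functions that differ only off-support), which is exactly the notion needed downstream since Theorem \ref{thm:garat} concerns only $\rho_g$, a quantity insensitive to off-support transitions. I would also note carefully that, although $d_g$ is itself defined through $\Tfun_g$, the argument is not circular: we treat $\rho_g$ as given data and extract $\Tfun_g$ from it purely through Equation \eqref{eqn:unique_recover}, never re-deriving the visitation from an assumed transition function.
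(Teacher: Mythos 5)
Your proof is correct and follows essentially the same route as the paper's: both rest on the factorization $\rho_g(s,a,s') = d_g(s)\,\p(a|s)\,\Tfun_g(s'|s,a)$ and divide out the state-action visitation to pin down $\Tfun_g$, with the paper merely phrasing this algebra as a contradiction between two hypothesized transition functions sharing the same marginal. If anything you are more careful than the paper, whose proof silently divides by $\rho^{\p}(s)\p(a|s)$ and thus overlooks exactly the off-support caveat you flag --- strict uniqueness holds only on visited state-action pairs, and your observation that this weaker notion suffices for Theorem \ref{thm:garat} is the right way to patch the statement.
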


Proof in Appendix \ref{app:proof_unique}.
%
%
We can also show that if two transition functions are equal, then the optimal policy in one will be optimal in the other.







\begin{restatable} {proposition}
{propopt}
\label{prop:opt}
If $\Tfun_{t} = \Tfun_{g}$, then $\argmax_{\p \in \pset} \mathbb{E}_{\p, \Tfun_g} [G_0] = \argmax_{\p \in \pset} \mathbb{E}_{\p, \Tfun_{t}} [G_0]$.
\end{restatable}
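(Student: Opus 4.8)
The plan is to show that the two functionals $\p \mapsto \mathbb{E}_{\p, \Tfun_g}[G_0]$ and $\p \mapsto \mathbb{E}_{\p, \Tfun_{real}}[G_0]$ coincide as functions on $\pset$; once this is established, their $\argmax$ sets are equal trivially, since two identical functions attain their maxima at exactly the same points. The entire argument rests on the fact that $\msim$ (and hence the grounded simulator) and $\mreal$ are \emph{otherwise identical} MDPs, sharing the same state and action sets, reward function $\Rfun$, discount $\D$, and initial state distribution $\rho_0$; the transition function is their only point of difference, and the hypothesis $\Tfun_{real} = \Tfun_g$ removes even that.

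Concretely, I would first fix an arbitrary policy $\p \in \pset$ and compare the per-timestep state-visitation probabilities $p(s_t = s \mid \p, \Tfun_g)$ and $p(s_t = s \mid \p, \Tfun_{real})$. An induction on $t$ settles this: at $t=0$ both equal $\rho_0(s)$, and the inductive step propagates the distribution forward through $\p$ and the transition function, which are identical in the two cases by hypothesis, so the visitation probabilities agree at every $t$. Substituting this equality into the definition of the marginal transition distribution, $\rho_q(s,a,s') = (1-\D)\,\p(a|s)\,\Tfun_q(s'|s,a)\sum_{t=0}^{\infty}\D^{t} p(s_t = s \mid \p, \Tfun_q)$, and using $\Tfun_g = \Tfun_{real}$ for the remaining factor, yields $\rho_g = \rho_{real}$ for this (and hence every) fixed $\p$. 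Plugging this into the marginal-return identity~\eqref{eqn:marg_return} and invoking the shared reward $\Rfun$ gives $\mathbb{E}_{\p, \Tfun_g}[G_0] = \mathbb{E}_{\p, \Tfun_{real}}[G_0]$.

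Since $\p$ was arbitrary, the two objective functionals agree everywhere on $\pset$, so they are maximized by exactly the same set of policies, which is the claimed equality of $\argmax$ sets. As an alternative to the visitation-distribution induction, one could appeal to Proposition~\ref{prop:unique} to argue that $\rho_g$ is determined by $\Tfun_g$, but the direct induction is cleaner here. I do not expect a genuine obstacle: the only point demanding care is to make explicit that every MDP component other than the transition function is common to the two processes, so that equating the transition functions really does equate the full return functionals and not merely their dynamics.
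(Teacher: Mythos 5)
Your proof is correct, and its overall skeleton matches the paper's: establish $\rho^{\p}_{g} = \rho^{\p}_{real}$ for every fixed $\p \in \pset$, feed that equality into the marginal-return identity (Equation \ref{eqn:marg_return}) together with the shared reward $\Rfun$, and conclude that the two return functionals, hence their $\argmax$ sets, coincide. Where you differ is in how the equality of marginals is justified. The paper cites Proposition \ref{prop:unique}, which asserts that (for a fixed policy) the marginal transition distribution determines the transition function; strictly speaking that is the \emph{converse} of what is needed here, which is that the transition function (together with $\p$ and $\rho_0$) determines the marginal. Your induction on the state-visitation probabilities $p(s_t = s \mid \p, \Tfun)$ proves exactly this forward implication, so your route is the logically cleaner one: it supplies the well-definedness step that the paper treats as immediate (or attributes, somewhat loosely, to Proposition \ref{prop:unique}). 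What your version buys is that Proposition \ref{prop:opt} no longer depends on Proposition \ref{prop:unique} at all, and it makes explicit the role of the ``otherwise identical'' assumption (shared $\sset$, $\aset$, $\Rfun$, $\D$, $\rho_0$), on which both arguments ultimately rest; the cost is only a few extra lines for the induction.
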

Proof in Appendix \ref{app:opt}.
We now prove Theorem \ref{thm:garat}, which characterizes the policy learned by $\mathtt{RL}(\Tilde{c})$ on the cost function $\Tilde{c}$ recovered by $\mathtt{ATIRL}_{\psi}(\Tfun_{t})$.

\begin{proof}[Proof of Theorem \ref{thm:garat}]
To prove Theorem \ref{thm:garat}, we prove that $\mathtt{RL} \circ \mathtt{ATIRL}_{\psi}(\Tfun_{t})$ and $\argmin_{\p_{g}} \psi^*(\rho_{g} - \rho_{t})$ result in the same marginal transition distribution.
This proof has three parts, two of which are proving that both objectives above can be formulated as optimizing over marginal transition distributions.
The third is to show that these equivalent objectives result in the same distribution.

The output of both $\mathtt{RL} \circ \mathtt{ATIRL}_{\psi}(\Tfun_{t})$ and $\argmin_{\p_{g}} \psi^*(\rho_{g} - \rho_{t})$ are policies.
To compare the marginal distributions, we first establish a different $\overline{\mathtt{RL}}\circ \overline{\mathtt{ATIRL}}_{\psi}(\Tfun_{t})$ objective that we argue has the same marginal transition distribution as $\mathtt{RL} \circ \mathtt{ATIRL}_{\psi}(\Tfun_{t})$.
We define
\begin{equation}
    \overline{\mathtt{ATIRL}}_{\psi}(\Tfun_{t}) \defd \argmax_{c \in \mathcal{C}} -\psi(c) + \left(\min_{\rho_{g} \in \mathcal{P}_{g}} \mathbb{E}_{\rho_g} \left[c(s, a, s')\right] \right) - \mathbb{E}_{\rho_{t}} \left[c(s, a, s')\right]
    \label{eqn:bar_atirl}
\end{equation}
with the same $\psi$ and $\mathcal{C}$ as Equation \ref{eqn:ATIRL}, and similar except the internal optimization for Equation \ref{eqn:ATIRL} is over $\p_g \in \pset_g$, while it is over $\rho_g \in \mathcal{P}_g$ for Equation \ref{eqn:bar_atirl}.
We define an RL procedure $\overline{\mathtt{RL}}(\overline{c}) \defd \argmin_{\rho_{g} \in \mathcal{P}_{g}} \mathbb{E}_{\rho_g} c(s, a, s')$ that returns a marginal transition distribution $\rho_{g} \in \mathcal{P}_{g}$ which minimizes the given cost function $\overline{c}$.
$\overline{\mathtt{RL}}(\overline{c})$ will output the marginal transition distribution $\overline{\rho}_{g}$.

\begin{restatable}{lemma}{lemoverlineequi}
\label{lem:overline_equi}
$\overline{\mathtt{RL}}\circ \overline{\mathtt{ATIRL}}_{\psi}(\Tfun_{t})$ outputs a marginal transition distribution $\overline{\rho}_{g}$ which is equal to $\Tilde{\rho}_{g}$ induced by $\mathtt{RL} \circ \mathtt{ATIRL}_{\psi}(\Tfun_{t})$.
\end{restatable}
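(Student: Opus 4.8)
The plan is to show that the two composed procedures agree in two stages: first that the outer argmax over cost functions recovers the \emph{same} cost function $\Tilde{c}$, and then that applying the respective RL steps to $\Tilde{c}$ yields the same marginal transition distribution. The single fact driving both stages is that the quantity being optimized, $\mathbb{E}_{\rho_g}[c(s,a,s')] = \sum_{s,a,s'} \rho_g(s,a,s')\,c(s,a,s')$, depends on the action transformation policy $\p_g$ only through its induced marginal transition distribution $\rho_g$, combined with the fact (noted after Equation \ref{eqn:ATIRL}) that the map $\p_g \mapsto \rho_g$ is a surjection from $\pset_g$ onto $\mathcal{P}_g$.

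First I would establish the value identity
\[
\min_{\p_g \in \pset_g} \mathbb{E}_{\rho_g}[c(s,a,s')] = \min_{\rho_g \in \mathcal{P}_g} \mathbb{E}_{\rho_g}[c(s,a,s')]
\]
for every fixed $c \in \mathcal{C}$. The inequality $\geq$ is immediate, since each admissible $\p_g$ induces an admissible $\rho_g \in \mathcal{P}_g$ via Equation \ref{eqn:rho_g}; the reverse inequality is exactly where surjectivity is needed, since it guarantees that every $\rho_g \in \mathcal{P}_g$ is realized by some $\p_g \in \pset_g$, so no marginal in the right-hand domain is unreachable from the left. Because this inner value matches for all $c$, the two outer objectives in Equations \ref{eqn:ATIRL} and \ref{eqn:bar_atirl} are identical as functions of $c$ (they differ only in the inner optimization domain), and hence $\mathtt{ATIRL}_\psi(\Tfun_{real})$ and $\overline{\mathtt{ATIRL}}_\psi(\Tfun_{real})$ return the same recovered cost function $\Tilde{c}$.

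Next I would apply the two RL steps to this common $\Tilde{c}$. By the same value identity, $\min_{\p_g}\mathbb{E}_{\rho_g}[\Tilde{c}] = \min_{\rho_g}\mathbb{E}_{\rho_g}[\Tilde{c}]$, so the marginal $\Tilde{\rho}_g$ induced by the minimizing policy returned by $\mathtt{RL}(\Tilde{c})$ attains the minimum of $\mathbb{E}_{\rho_g}[\Tilde{c}]$ over $\mathcal{P}_g$, and therefore is itself an optimizer of the problem defining $\overline{\mathtt{RL}}(\Tilde{c})$. Thus $\Tilde{\rho}_g$ and $\overline{\rho}_g$ are both minimizers of the same functional over $\mathcal{P}_g$. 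To conclude $\Tilde{\rho}_g = \overline{\rho}_g$ I would invoke the standing assumption that some $\p_g \in \pset_g$ matches the real dynamics exactly; together with the structure of the recovered $\Tilde{c}$ (whose inner-min/outer-max construction drives the cost-minimizing transition distribution toward $\rho_{real}$), this forces the minimizer to be a single distribution, pinning both procedures to the same $\rho_g$.

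The main obstacle I anticipate is precisely this last step: arguing that the two argmin problems select the \emph{same} marginal rather than merely achieving the same optimal \emph{value}. Since $\mathbb{E}_{\rho_g}[\Tilde{c}]$ is linear in $\rho_g$, it could in principle be minimized on an entire face of $\mathcal{P}_g$, leaving room for $\mathtt{RL}$ and $\overline{\mathtt{RL}}$ to pick different optimizers. I expect to resolve this by leaning on Proposition \ref{prop:unique}, which establishes that each marginal corresponds to a unique grounded transition function, together with the exact-matching assumption that collapses the optimizing set to the single point $\rho_{real}$; the surjectivity caveat is what keeps the two domains coincident under the policy-to-marginal map, so that the selected marginals remain consistent across the two formulations.
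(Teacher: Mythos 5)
Your proof follows essentially the same route as the paper's: both arguments rest entirely on the surjectivity of the map $\p_g \mapsto \rho_g$ from $\pset_g$ onto $\mathcal{P}_g$, used first to equate the inner minimizations in Equations \ref{eqn:ATIRL} and \ref{eqn:bar_atirl} (hence the recovered costs $\Tilde{c}$ and $\overline{c}$), and then to identify $\mathtt{RL}(\Tilde{c})$ with $\overline{\mathtt{RL}}(\overline{c})$ as optimization problems; the paper merely packages this as a proof by contradiction, while you argue directly, which is a cosmetic difference. Where you diverge is the final step: you rightly observe that $\mathbb{E}_{\rho_g}[\Tilde{c}]$ is linear in $\rho_g$ and so could be minimized on an entire face of the convex set $\mathcal{P}_g$, an issue the paper silently elides by treating the two coincident argmin problems as having a common solution. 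Your proposed fix does not quite work as sketched, though: Proposition \ref{prop:unique} says the map from a marginal $\rho_g$ to its grounded transition function $\Tfun_g$ is injective, which says nothing about uniqueness of the minimizer of a linear functional over $\mathcal{P}_g$, and the exact-realizability assumption by itself does not collapse that argmin set to the single point $\rho_{real}$ for an arbitrary recovered cost. What actually closes the lemma (and what the paper implicitly relies on) is weaker and simpler: once the feasible sets and objectives coincide, the two procedures pose literally the same problem, so their solution sets are equal, and any consistent selection yields $\overline{\rho}_g = \Tilde{\rho}_g$ — which is all Lemma \ref{lem:overline_equi} needs. So your proposal is correct in substance, but its last step invokes machinery that neither delivers what is claimed nor is required.
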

Proof in Appendix \ref{app:lemoverline_equi_proof}.
%
The mapping from $\pset_{g}$ to $\mathcal{P}_{g}$ is not injective, and there could be multiple policies $\p_{g}$ that lead to the same marginal transition distribution.
The above lemma is sufficient for proof of Theorem \ref{thm:garat}, however, since we focus on the effect of the policy on the transitions.

\begin{restatable} {lemma}{lemoverlinepsiequi}
 \label{lem:overline_psi_equi}
$\overline{\mathtt{RL}}\circ \overline{\mathtt{ATIRL}}_{\psi}(\Tfun_{t}) = \argmin_{\rho_{g} \in \mathcal{P}_{g}} \psi^*(\rho_{g} - \rho_{t})$.
\end{restatable}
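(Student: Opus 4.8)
The plan is to recast the claimed identity as a statement of minimax (strong) duality, mirroring the occupancy-measure argument of \cite{ho2016gail} but carried out over marginal transition distributions. Throughout I would abbreviate $\overline{L}(\rho_g, c) \defd -\psi(c) + \mathbb{E}_{\rho_g}[c(s, a, s')] - \mathbb{E}_{\rho_{real}}[c(s, a, s')]$, regarded as a function on $\mathcal{P}_g \times \mathcal{C}$, and show that the left-hand composition solves $\argmax_c \min_{\rho_g} \overline{L}$ followed by an inner minimization, while the right-hand side is $\argmin_{\rho_g} \max_c \overline{L}$.

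First I would rewrite the right-hand side through convex conjugacy. By the definition of the convex conjugate, $\psi^*(x) = \sup_{c \in \mathcal{C}}\big(\langle x, c\rangle - \psi(c)\big)$, and since $\langle \rho_g - \rho_{real}, c\rangle = \mathbb{E}_{\rho_g}[c(s,a,s')] - \mathbb{E}_{\rho_{real}}[c(s,a,s')]$, taking $x = \rho_g - \rho_{real}$ gives $\psi^*(\rho_g - \rho_{real}) = \max_{c \in \mathcal{C}} \overline{L}(\rho_g, c)$. Hence $\argmin_{\rho_g \in \mathcal{P}_g} \psi^*(\rho_g - \rho_{real}) = \argmin_{\rho_g} \max_c \overline{L}(\rho_g, c)$. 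Next I would match the left-hand side to the corresponding max–min problem: the terms $-\psi(c)$ and $-\mathbb{E}_{\rho_{real}}[c]$ are constant in $\rho_g$, so Equation \ref{eqn:bar_atirl} is exactly $\overline{\mathtt{ATIRL}}_{\psi}(\Tfun_{real}) = \argmax_c \min_{\rho_g} \overline{L}(\rho_g, c)$, and $\overline{\mathtt{RL}}(\bar c) = \argmin_{\rho_g} \mathbb{E}_{\rho_g}[\bar c] = \argmin_{\rho_g} \overline{L}(\rho_g, \bar c)$. The composition therefore selects a dual-optimal cost $\bar c$ and returns an inner minimizer $\bar\rho_g$.

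The bridge between the two formulations is a minimax theorem. I would observe that $\overline{L}$ is affine (hence convex) in $\rho_g$ over the convex set $\mathcal{P}_g$ and concave in $c$ (since $-\psi$ is concave and the remaining terms are affine), so under the appropriate compactness and semicontinuity hypotheses Sion's minimax theorem yields strong duality, $\min_{\rho_g} \max_c \overline{L} = \max_c \min_{\rho_g} \overline{L}$, together with a saddle point $(\rho_g^\star, c^\star)$. The saddle-point optimality conditions would then let me conclude that the marginal transition distribution returned by $\overline{\mathtt{RL}} \circ \overline{\mathtt{ATIRL}}_{\psi}$ coincides with a minimizer of $\max_c \overline{L}(\cdot, c) = \psi^*(\cdot - \rho_{real})$, which is the asserted set equality.

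The hard part will be that, because causal entropy was deliberately omitted, $\overline{L}$ is only affine—not strictly convex—in $\rho_g$, so the inner minimization $\argmin_{\rho_g} \overline{L}(\rho_g, \bar c)$ need not be a singleton; in principle $\overline{\mathtt{RL}}(\bar c)$ could return a distribution that minimizes $\overline{L}(\cdot, \bar c)$ without minimizing $\max_c \overline{L}$ (a degeneracy that strict convexity would otherwise rule out). I expect to resolve this by leaning on the realizability assumption of Section \ref{sec:atirl}, namely that some $\p_g$ achieves $\Tfun_g = \Tfun_{real}$ and hence $\rho_{real} \in \mathcal{P}_g$, together with Proposition \ref{prop:unique} and the fact that the conjugate $\psi^*$ of the chosen regularizer is uniquely minimized at $0$: at the recovered saddle point the cost $c^\star$ forces the minimizing marginal transition distribution to agree with $\rho_{real}$, which is simultaneously what the composition returns and the unique minimizer of the conjugate divergence on the right. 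Verifying that the min–max swap is legitimate on $\mathcal{P}_g$ (its convexity and the regularity required for Sion's theorem) and that this realizability argument genuinely pins the distribution down is the delicate step of the proof.
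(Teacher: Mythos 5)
Your proposal follows essentially the same route as the paper's proof: both define the bilinear loss $L(\rho_g, c) = -\psi(c) + \mathbb{E}_{\rho_g}[c] - \mathbb{E}_{\rho_{real}}[c]$, recast the right-hand side via conjugacy as $\argmin_{\rho_g}\max_c L$ and the left-hand side as the max--min composition, and conclude via minimax duality and a saddle-point argument over $\mathcal{P}_g$, whose compactness and convexity the paper establishes in a separate lemma (Lemma \ref{lem:compact}, by induction over time steps) --- the one ingredient you flag as needing verification but do not carry out. Your closing concern about non-uniqueness of the inner $\argmin$ when $L(\cdot, \bar{c})$ is merely affine is well taken --- the paper's proof silently identifies $\overline{\rho}_g$ with $\hat{\rho}_g$ at exactly this step --- so your realizability-based patch is a refinement of the paper's argument rather than a departure from it.
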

The proof in Appendix \ref{app:lem_proof} relies on the optimal cost function and the optimal policy forming a saddle point, $\psi^*$ leading to a minimax objective, and these objectives being the same.

\begin{restatable}{lemma}{lempsiequivalence}
The marginal transition distribution of
$\argmin_{\p_{g}} \psi^*(\rho_{g} - \rho_{t})$ is equal to $\argmin_{\rho_{g} \in \mathcal{P}_{g}} \psi^*(\rho_{g} - \rho_{t})$.
\label{lem:psi_equi}
\end{restatable}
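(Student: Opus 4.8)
The plan is to exploit the fact that the objective $\psi^*(\rho_g - \rho_{real})$ depends on the action transformation policy $\p_g$ only through the marginal transition distribution $\rho_g$ that $\p_g$ induces (via Equation \ref{eqn:rho_g}). Writing $\Phi: \pset_g \to \mathcal{P}_g$ for the map $\p_g \mapsto \rho_g$, the value of the policy-space objective at $\p_g$ equals the value of the distribution-space objective $\psi^*(\rho_g - \rho_{real})$ evaluated at $\rho_g = \Phi(\p_g)$. So minimizing over $\p_g \in \pset_g$ is the same optimization as minimizing over $\rho_g \in \mathcal{P}_g$, pulled back through $\Phi$.

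First I would record that $\Phi$ is surjective onto $\mathcal{P}_g$: this is exactly the surjectivity of the map from $\pset_g$ to $\mathcal{P}_g$ already noted in Section \ref{sec:theory} and relied upon for Lemma \ref{lem:overline_equi}. Surjectivity says the image of $\pset_g$ under $\Phi$ is precisely the feasible set $\mathcal{P}_g$ of the second problem. From this the two problems share the same optimal value: every $\p_g$ yields a feasible $\rho_g = \Phi(\p_g) \in \mathcal{P}_g$, and conversely every $\rho_g \in \mathcal{P}_g$ is attained by some $\p_g$.

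Next I would upgrade this value equality to an equality of the minimizing marginal transition distributions. If $\p_g^\star$ minimizes $\psi^*(\Phi(\p_g) - \rho_{real})$, then $\Phi(\p_g^\star)$ attains the minimum of $\psi^*(\rho_g - \rho_{real})$ over $\mathcal{P}_g$ and hence lies in $\argmin_{\rho_g \in \mathcal{P}_g} \psi^*(\rho_g - \rho_{real})$. Conversely, any minimizer $\rho_g^\star \in \mathcal{P}_g$ of the distribution-space problem equals $\Phi(\p_g)$ for some $\p_g$ by surjectivity, and that $\p_g$ then minimizes the policy-space problem. This shows that the marginal transition distribution produced by $\argmin_{\p_g} \psi^*(\rho_g - \rho_{real})$ coincides with $\argmin_{\rho_g \in \mathcal{P}_g} \psi^*(\rho_g - \rho_{real})$, as claimed.

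The only delicate point, and the reason the lemma is phrased in terms of marginal transition distributions rather than policies, is that $\Phi$ is not injective: distinct policies $\p_g$ can induce the same $\rho_g$. This many-to-one behavior is harmless here precisely because the argument tracks only the induced $\rho_g$ and never the identity of the policy, so surjectivity is the sole property of $\Phi$ it requires. I do not expect to need convexity of $\psi^*$ for this equality of minimizers, since the entire statement reduces to a reparametrization of one and the same optimization through a surjection.
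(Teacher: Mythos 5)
Your proposal is correct and takes essentially the same approach as the paper: the paper's proof likewise observes that, with the agent policy $\p$ and $\Tfun_{sim}$ fixed, the objective depends on $\p_g$ only through the induced $\rho_g$ (Equation \ref{eqn:rho_g}), so minimizing over $\p_g$ is the same optimization as minimizing over $\rho_g \in \mathcal{P}_{g}$. Your write-up simply makes explicit the surjectivity of the policy-to-distribution map and the two-sided correspondence of minimizers, which the paper leaves implicit.
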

Proof in appendix \ref{app:proof_psi_equi}.
%
With these three lemmas, we have proved that  $\mathtt{RL} \circ \mathtt{ATIRL}_{\psi}(\Tfun_{t})$ and $\argmin_{\p_{g}} \psi^*(\rho_{g} - \rho_{t})$ induce policies that have the same marginal transition distribution.
\end{proof}

Theorem \ref{thm:garat} thus tells us that the objective $\argmin_{\p_{g}} \psi^*(\rho_{g} - \rho_{t})$ is equivalent to the procedure from Section \ref{sec:atirl}.
In the next section, we choose a function $\psi$ which leads to our adversarial objective.


\subsection{Forming the Adversarial Objective}
\label{sec:garat}
%
Section \ref{sec:theory} laid out the objective we want to minimize.
To solve $\argmin_{\p_{g}} \psi^*(\rho_{g} - \rho_{t})$ we require an appropriate regularizer $\psi$.
\gail{} \cite{ho2016gail} and \gaifo{} \cite{torabi2018generative} optimize similar objectives and have shown a regularizer similar to the following to work well:
\begin{align}
    \psi(c) = \begin{cases}
    \mathbb{E}_{t} [ g(c(s, a, s'))] & \text{if } c < 0 \\
    +\infty & \text{otherwise}
    \end{cases}
    \text{where }
    g(x) = \begin{cases}
          -x - log(1 - e^x) &\text{if } x < 0 \\
          + \infty  & \text{otherwise}
    \end{cases}
\end{align}
It is closed, proper, convex and has a convex conjugate leading to the following minimax objective:
\begin{align}
    \min_{\p_g \in \pset_g} \psi^*(\rho_g - \rho_{t}) = \min_{\p_g \in \pset_g} \max_{D} \mathbb{E}_{\Tfun_g}[\log (D(s, a, s'))] + \mathbb{E}_{\Tfun_{t}}[ \log(1 - D(s, a, s'))]
\end{align}
where the reward for the action transformer policy  $\p_g$ is $-[\log(D(s, a, s'))]$, and $D: \sset \times \aset \times \sset \longmapsto (0, 1)$ is a discriminative classifier.
These properties have been shown in previous works \cite{ho2016gail, torabi2018generative}.
Algorithm \ref{alg} lays out the steps for learning the action transformer using the above procedure, which we call generative adversarial reinforced action transformation (\garat{}).

\section{Related Work} \label{sec:related}
%

While our work lies in the space of transfer learning with dynamics mismatch, the eventual goal of this research is to enable effective sim-to-real transfer.
In this section, we discuss the variety of sim-to-real methods, work more closely related to \garat{}, and some related methods in the IfO literature.
Sim-to-real transfer can be improved by making the agent's policy more robust to variations in the environment or by making the simulator more accurate w.r.t.\ the real world.
The first approach, which we call policy robustness methods, encompasses algorithms that train a robust policy that performs well on a range of environments \cite{jakobi1997evolutionary, peng2018sim, peng2020learning, pinto2017robust, epopt, sadeghi2016cad2rl, tobin2017domain, tobin2018domain}.
Robust adversarial reinforcement learning (\rarl{}) \cite{pinto2017robust} is such an algorithm that learns a policy robust to adversarial perturbations \cite{szegedy2013intriguing}.
While primarily focused on training with a modifiable simulator, a version of \rarl{} treats the simulator as a black-box by adding the adversarial perturbation directly to the protagonist's action.
Additive noise envelope (\textsc{ane}) \cite{action_envelope_noise} is another black-box robustness method which adds an envelope of Gaussian noise to the agent's action during training.

The second approach, known as domain adaption or system identification, grounds the simulator using real world data to make its transitions more realistic.
Since hand engineering accurate simulators \cite{minitaur, xie2019learning} can be expensive and time consuming, real world data can be used to adapt low-fidelity simulators to the task at hand.
Most simulator adaptation methods \cite{allevato2019tunenet, chebotar2019closing,farchy2013gsl, hwangbo2019learning} rely on access to a parameterized simulator.

\garat{}, on the other hand, does not require a modifiable simulator and relies on an action transformation policy applied in the source environment to bring its transitions closer to the target environment.
\gat{}\cite{hanna2017grounded} learns an action transformation function similar to \garat{}.
It was shown to have successfully learned and transferred one of the fastest known walk policies on the humanoid robot, Nao.
%

\garat{} draws from recent generative adversarial approaches to imitation learning (\gail{} \cite{ho2016gail}) and IfO (\gaifo{} \cite{torabi2018generative}).
%
%
\textsc{airl}\cite{fu2018learning}, \textsc{fairl}\cite{ghasemipour2019divergence}, and \textsc{wail}\cite{xiao2019wasserstein} are related approaches which use different divergence metrics to reduce the marginal distribution mismatch.
\garat{} can be adapted to use any of these metrics, as we show in the appendix.

One of the insights of this paper is that grounding the simulator using action transformation can be seen as a form of IfO.
\textsc{bco} \cite{torabi2018behavioral} is an IfO technique that utilizes behavioral cloning.
%
%
%
\textsc{i2l} \cite{Gangwani2020State-only} is an \emph{IfO} algorithm that aims to learn in the presence of transition dynamics mismatch in the expert and agent's domains, but requires millions of real world interactions to be competent.
%

\section{Experiments} \label{sec:exp}

In this section, we conduct experiments to verify our hypothesis that \garat{} leads to improved transfer in the presence of dynamics mismatch compared to previous methods.
We also show that it leads to better source environment grounding compared to the previous action transformation approach, \gat{}.

%
We validate \garat{} for transfer by transferring the agent policy between Open AI Gym \cite{brockman2016openai} simulated environments with different transition dynamics.
We highlight the Minitaur domain (Figure \ref{fig:minitaur}) as a particularly useful test since there exist two simulators, one of which has been carefully engineered for high fidelity to the real robot \cite{minitaur}.
For other environments, the target environment is the source environment modified in different ways such that a policy trained in the source environment does not transfer well to the target environment.
Details of these modifications are provided in Appendix \ref{environment_modifications}. Apart from a thorough evaluation across multiple different domains, this setup also allows us to compare \garat{} and other algorithms against a policy trained directly in the target environment with millions of interactions, which is otherwise prohibitively expensive on a real robot.
This setup also allows us to perform a thorough evaluation of sim-to-real algorithms across multiple different domains.
%
%
We focus here on answering the following questions :
\begin{enumerate}
    \itemsep0em 
    \item How well does \textsc{garat} ground the source environment with respect to the target environment?
    \item Does \textsc{garat} lead to improved transfer with in the presence of dynamics mismatch, compared to other related methods?
\end{enumerate}

\subsection{Source Environment Grounding} \label{sec:sim_ground}

\begin{figure}[!tb]
    \centering
    \begin{subfigure}[b]{0.49\textwidth}
        \centering
        \includegraphics[width=\textwidth]{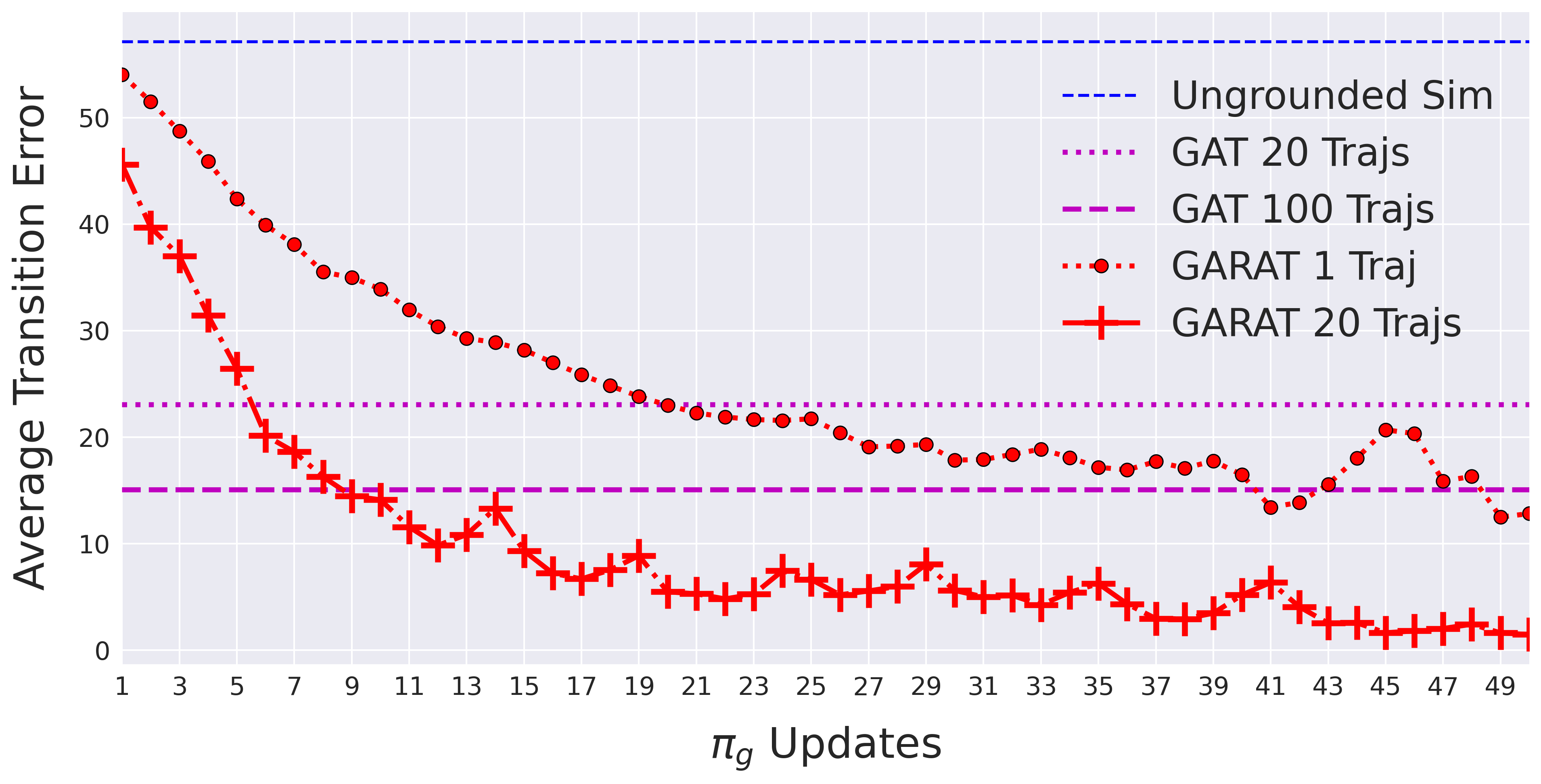}
        \caption{$\mathtt{L2}$ norm of per step transition errors (lower is better) between different source environments and the target environment, shown over number of action transformation policy updates for \garat{}.}
        \label{fig:persteptransitionerrors}
    \end{subfigure}
    \hfill
    \begin{subfigure}[b]{0.49\textwidth}
        \centering
        \includegraphics[width=\textwidth]{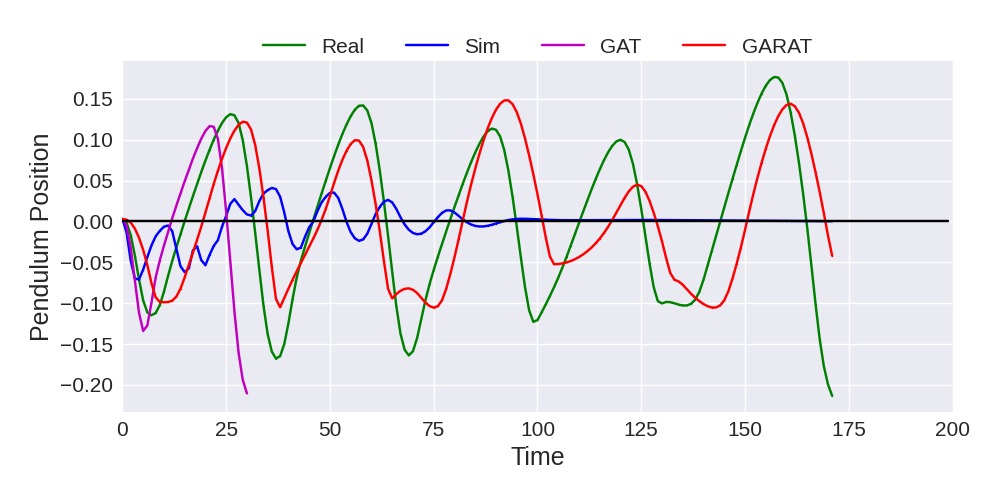}
        \caption{Example trajectories of the same agent policy deployed in different environments, plotted using the pendulum angle across time. Response of \garat{} grounded source environment is the most like target environment.}
        \label{fig:trajs}
    \end{subfigure}
    \caption{Evaluation of source environment grounding with \garat{} in \textit{InvertedPendulum} domain}
    \label{fig:trans_error}
\end{figure}

In Figure \ref{fig:trans_error}, we evaluate how well \textsc{garat} grounds the source environment to the target environment both quantitatively and qualitatively.
This evaluation is in the \textit{InvertedPendulum} domain, where the target environment has a heavier pendulum than the source; implementation details are in Appendix \ref{environment_modifications}.
In Figure \ref{fig:persteptransitionerrors}, we plot the average error in transitions in source environments grounded with \garat{} and \gat{} with different amounts of target environment data, collected by deploying $\pi$ in the target environment.
In Figure \ref{fig:trajs} we deploy the same policy $\p$ from the same start state in the different environments (source, target, and grounded source).
From both these figures it is evident that \garat{} leads to a grounded source environment with lower error on average, and responses qualitatively closer to the target environment compared to \gat{}.
Details of how we obtained these plots are in Appendix \ref{app:sim_ground}.

\subsection{Transfer Experiments} \label{sec:exp_transfer}
\begin{wrapfigure}{R}{0.4\textwidth}
  \begin{center}
    \includegraphics[width=0.4\textwidth]{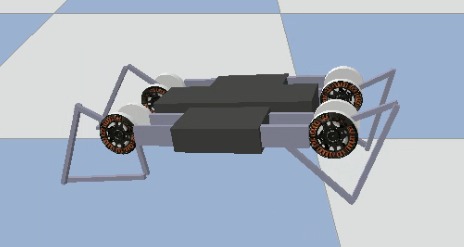}
  \end{center}
  \caption{The Minitaur Domain}
  \label{fig:minitaur}
\end{wrapfigure}

We now validate the effectiveness of \textsc{garat} at transferring a policy from source environment to target environment.
For various MuJoCo \cite{todorov2012mujoco} environments, we pretrain the agent policy $\p$ in the ungrounded source environment, collect target environment data with $\pi$, use \textsc{garat} to ground the source environment, re-train the agent policy until convergence in these grounded source environments, and then evaluate mean return across 50 episodes for the updated agent policy in the target environment.

The agent policy $\p$ and action transformation policy $\pi_g$ are trained with \textsc{trpo} \cite{TRPO_algo} and \textsc{ppo} \cite{PPO_algo} respectively.
The specific hyperparameters used are provided in Appendix \ref{expt_details}.
We use the implementations of \textsc{trpo} and \textsc{ppo} provided in the stable-baselines library \cite{stable-baselines}.
For every $\p_g$ update, we update the \garat{} discriminator $D_{\phi}$ once as well.
Results here use the losses detailed in Algorithm \ref{alg}.
However, we find that \garat{} is just as effective with other divergence measures \cite{fu2018learning, ghasemipour2019divergence, xiao2019wasserstein} (Appendix \ref{expt_details}).

\garat{} is compared to \gat{} \cite{hanna2017grounded}, \textsc{rarl} \cite{pinto2017robust} adapted for a black-box simulator, and action-noise-envelope (\textsc{ane}) \cite{action_envelope_noise}.
$\pi_{t}$ and $\pi_{s}$ denote policies trained in the target environment and source environment respectively until convergence.
We use the best performing hyperparameters for these methods, specified in Appendix \ref{expt_details}.

\begin{figure}[!bt]
    \centering
    \includegraphics[width=\textwidth]{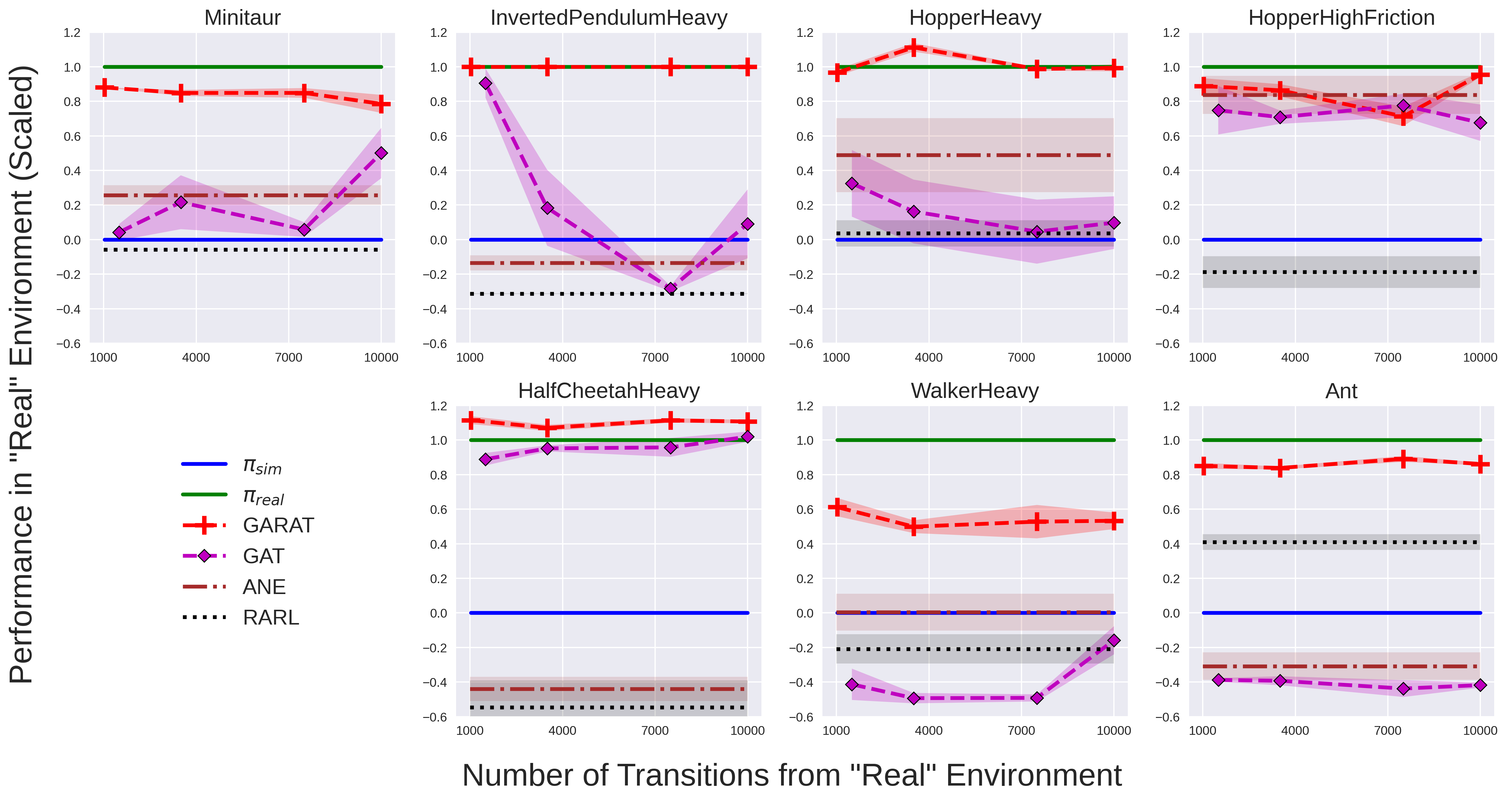}
    \caption{Performance of different techniques evaluated in target environment. Environment return on the $y$-axis is scaled such that $\pi_{t}$ achieves 1 and $\pi_{s}$ achieves 0.
    }
    \label{fig:results}
\end{figure}

Figure \ref{fig:results} shows that, in most of the domains, \garat{} with just a few thousand transitions from the target environment facilitates transfer of policies that perform on par with policies trained directly in the target environment using 1 million transitions.
\garat{} also consistently performs better than previous methods on all domains, except \emph{HopperHighFriction}, where most of the methods perform well.
The shaded envelope denotes the standard error across 5 experiments with different random seeds for all the methods. 
Apart from the MuJoCo simulator, we also show successful transfer in the PyBullet simulator \cite{coumans2016pybullet} using the \emph{Ant} domain.
Here the target environment has gravity twice that of the source environment, resulting in purely source environment-trained policies collapsing ineffectually in the target environment.
In this relatively high dimensional domain, as well as in \emph{Walker}, we see \garat{} still transfers a competent policy while the related methods fail.

In the Minitaur domain \cite{minitaur} we use the high fidelity simulator as our target environment.
Here as well, a policy trained in the source environment does not directly transfer well to the target environment \cite{policy-strategy-optimization}.
We see in this realistic setting that \garat{} learns a policy that obtains more than 80\% of the optimal target environment performance with just $1000$ target environment transitions while the next best baseline (\gat{}) obtains at most 50\%, requiring ten times more target environment data.

\section{Conclusion}

In this paper, we have shown that grounded action transformation, a particular kind of grounded transfer technique, can be seen as a form of imitation from observation.
We use this insight to develop \garat{}, an adversarial imitation from observation algorithm for grounded transfer.
%
%
We hypothesized that such an algorithm would lead to improved grounding of the source environment as well as better transfer compared to related techniques.
This hypothesis is validated in Section \ref{sec:exp} where we show that \garat{} leads to better grounding of the source environment as compared to \gat{}, and improved transfer to the target environment on various mismatched environment transfers, including the realistic Minitaur domain.

\section*{Acknowledgements and Disclosure of Funding}
This work has taken place in the Learning Agents Research
Group (LARG) at the Artificial Intelligence Laboratory, The University of Texas at Austin.
LARG research is supported in part by grants from the National Science Foundation (CPS-1739964, IIS-1724157, NRI-1925082), the Office of Naval Research (N00014-18-2243), Future of
Life Institute (RFP2-000), Army Research Office (W911NF-19-2-0333), DARPA, Lockheed Martin, General Motors, and Bosch.
The views and conclusions contained in this document are those of the authors alone.
Peter Stone serves as the Executive Director of Sony AI America and receives financial compensation for this work.  
The terms of this arrangement have been reviewed and approved by the University of Texas at Austin in accordance with its policy on objectivity in research.

\section*{Broader Impact}
Reinforcement learning \cite{sutton2018reinforcement} is being considered as an effective tool to train autonomous agents in various important domains like robotics, medicine, etc.
A major hurdle to deploying learning agents in these environments is the massive exploration and data requirements \cite{hanna2019data} to ensure that these agents learn effective policies.
Real world interactions and exploration in these situations could be extremely expensive (wear and tear on expensive robots), or dangerous (treating a patient in the medical domain).

Sim-to-real transfer aims to address this hurdle and enables agents to be trained mostly in simulation and then transferred to the real world based on very few interactions.
Reducing the requirement for real world data for autonomous agents might open up the viability for autonomous agents in other fields as well.

Improved sim-to-real transfer will also reduce the pressure for high fidelity simulators, which require significant engineering effort \citep{chebotar2019closing, minitaur}.
Simulators are also developed with a task in mind, and are generally not reliable outside their specifications.
Sim-to-real transfer might enable simulators that learn to adapt to the task that needs to be performed, a potential direction for future research.

Sim-to-real research needs to be handled carefully, however.
Grounded simulators might lead to a false sense of confidence in a policy trained in such a simulator.
However, a simulator grounded with real world data will still perform poorly in situations outside the data distribution.
As has been noted in the broader field of machine learning \cite{amodei2016concrete}, out of training distribution situations might lead to unexpected consequences.
Simulator grounding must be done carefully in order to guarantee that the grounding is applied over all relevant parts of the environment.

Improved sim-to-real transfer could increase reliance on compute and reduce incentives for sample efficient methods.
The field should be careful in not abandoning this thread of research as the increasing cost and impact of computation used by machine learning becomes more apparent \cite{amodei2018ai}.

\bibliographystyle{plainnat}
\bibliography{refs}

\clearpage
\appendix

\section{Marginal Distributions and Returns} \label{app:marginal}

We expand the marginal transition distribution ($\rho_{s}$) definition to be more explicit below.

\begin{align}
    \rho_{sim,t}(s, a, s') &\defd \rho_{sim,t}(s) \p(a|s) \Tfun_{s}(s'|s, a)  \label{eqn:trans_t}\\
    \rho_{sim, t}(s') &\defd \sum_{s \in \sset} \sum_{a \in \aset} \rho_{sim, t-1}(s, a, s') \label{eqn:dist_t} \\
    \rho_{s}(s, a, s') &\defd (1 - \D) \sum_{t=0}^{\infty} \D^t \rho_{sim, t}(s, a, s') \label{eqn:trans_marg}
\end{align}

where $\rho_{sim,0}(s) = \rho_0(s)$ is the starting state distribution.
Written in a single equation:
\begin{align*}
    \rho_{s}(s, a, s') &= (1 - \D) \sum_{s_{0} \in \sset} \rho_0(s_{0}) \sum_{t=0}^{\infty} \D^{t} \sum_{a_{t}\in \aset} \sum_{s_{t+1} \in \sset} \p(a_{t}|s_{t}) \Tfun(s_{t+1}|s_{t}, a_{t})
\end{align*}

The expected return can be written more explicitly to show the dependence on the transition function.
It then makes the connection to \ref{eqn:marg_return} more explicit.
\begin{align*}
\mathbb{E}_{\p, \Tfun}\left[ G_0 \right] 
&= \mathbb{E}_{\p, \Tfun} \left[ \sum_{t=0}^{\infty} \D^t R(s_t, a_t, s_{t+1})\right] \\
&= \sum_{s_{0} \in \sset} \rho_0(s_{0}) \sum_{t=0}^{\infty} \D^{t} \sum_{a_{t}\in \aset} \sum_{s_{t+1} \in \sset} \p(a_{t}|s_{t}) \Tfun(s_{t+1}|s_{t}, a_{t}) \Rfun(s_{t}, a_{t}, s_{t+1})
\end{align*}

In the grounded source environment, the action transformer policy $\p_g$ transforms the transition function as specified in Section \ref{sec:s2r}.
Ideally, such a $\p_g \in \pset_g$ exists.
We denote the marginal transition distributions in sim and real by $\rho_{s}$ and $\rho_{t}$ respectively, and $\rho_g \in \mathcal{P}_{g}$ for the grounded source environment.
%
The distribution $\rho_g$ relies on $\p_g \in \pset_g$ as follows:
\begin{align} \label{eqn:rho_g}
 \rho_g(s, a, s') &= (1 - \D) \p(a|s) \sum_{\Tilde{a} \in \aset} \Tfun_{s}(s'|s, \Tilde{a}) \p_g(\Tilde{a}|s, a) \sum_{t=0}^\infty \D^t p(s_t=s|\p, \Tfun_g)   
\end{align}

The marginal transition distribution of the source environment after action transformation, $\rho_{g}(s, a, s')$, differs in Equation \ref{eqn:trans_t} as follows:

\begin{align}
    \rho_{g,t}(s, a, s') &\defd \rho_{g,t}(s) \p(a|s) \sum_{\Tilde{a} \in \aset} \p_{g}(\Tilde{a}|s, a) \Tfun_{g}(s'|s, \Tilde{a})  \label{eqn:at_trans_t}
\end{align}

\section{Proofs} \label{app:proofs}

\subsection{Proof of Proposition \ref{prop:unique} }
\propunique*
\label{app:proof_unique}
\begin{proof}
We prove the above statement by contradiction.
Consider two transition functions $\Tfun_{1}$ and $\Tfun_{2}$ that have the same marginal distribution $\rho_\p$ under the same policy $\p$, but differ in their likelihood for at least one transition $(s, a, s')$. 
\begin{align}
    \Tfun_1(s'|s, a) \neq \Tfun_2(s'|s, a) \label{eqn:tneq}
\end{align}

Let us denote the marginal distributions for $\Tfun_{1}$ and $\Tfun_{2}$ under policy $\p$ as $\rho^\p_1$ and $\rho^\p_2$. Thus, $\rho^\p_1(s) = \rho^\p_2(s)$ $\forall s \in \sset$ and $\rho^\p_1(s, a, s') = \rho^\p_2(s, a, s') \forall s, s' \in \sset, a \in \aset$.

The marginal likelihood of the above transition for both $\Tfun_1$ and $\Tfun_2$ is:
\begin{align*}
    \rho^\p_1(s, a, s') &= \sum_{t=0}^{T-1} \rho^\p_1(s) \p(a|s) \Tfun_1(s'|s, a) \\
    \rho^\p_2(s, a, s') &= \sum_{t=0}^{T-1} \rho^\p_2(s) \p(a|s) \Tfun_2(s'|s, a)
\end{align*}

Since the marginal distributions match, and the policy is the same, this leads to the equality:
\begin{align}
    \Tfun_1(s'|s, a) = \Tfun_2(s'|s, a) \forall s, s' \in \sset, a \in \aset \label{eqn:teq}
\end{align}

Equation \ref{eqn:teq} contradicts Equation \ref{eqn:tneq}, proving our claim.
\end{proof}

\subsection{Proof of Proposition \ref{prop:opt}}
\label{app:opt}
\propopt*
\begin{proof}
We overload the notation slightly and refer to $\rho^{\p}_{t}$ as the marginal transition distribution in the target environment while following agent policy $\p$.
Proposition \ref{prop:unique} still holds under this expanded notation.

From Proposition \ref{prop:unique}, if $\Tfun_{t} = \Tfun_{g}$, we can say that $\rho^\p_{t} = \rho^\p_{g} \forall \p \in \pset$.
From Equation \ref{eqn:marg_return},  $\mathbb{E}_{\p,g} [G_0] = \mathbb{E}_{\p,real} [G_0]  \forall \p \in \pset$, and $\argmax_{\p \in \pset} \mathbb{E}_{\p,g} [G_0] = \argmax_{\p \in \pset} \mathbb{E}_{\p,real} [G_0]$.
\end{proof}

\subsection{Proof of Lemma \ref{lem:overline_equi}}
\label{app:lemoverline_equi_proof}
\lemoverlineequi*
\begin{proof}
For every $\rho_{g} \in \mathcal{P}_{g}$, there exists at least one action transformer policy $\p_{g} \in \pset_{g}$, from our definition of $\mathcal{P}_{g}$.
Let $\mathtt{RL} \circ \mathtt{ATIRL}_{\psi}(\Tfun_{t})$ lead to a policy $\Tilde{\p}_{g}$, with a marginal transition distribution $\Tilde{\rho}_{g}$.
The marginal transition distribution induced by $\overline{\mathtt{RL}} \circ \overline{\mathtt{ATIRL}}_{\psi}(\Tfun_{t})$ is $\overline{\rho}_{g}$.

We need to prove that $\Tilde{\rho}_{g} = \overline{\rho}_{g}$, and we do so by contradiction.
We assume that $\Tilde{\rho}_{g} \neq \overline{\rho}_{g}$.
For this inequality to be true, the marginal transition distribution of the result of $\mathtt{RL}(\Tilde{c})$ must be different than the result of $\overline{\mathtt{RL}}(\overline{c})$, or the cost functions $\Tilde{c}$ and $\overline{c}$ must be different.

Let us compare the $\mathtt{RL}$ procedures first.
Assume that $\Tilde{c} = \overline{c}$.
\begin{align*}
    \mathtt{RL}(\Tilde{c}) &= \argmin_{\p} \mathbb{E}_{\rho_g} \left[\Tilde{c}(s, a, s')\right] \\
    &= \argmin_{\rho_g} \mathbb{E}_{\rho_g} \left[\Tilde{c}(s, a, s')\right] \texttt{  ...(surjective mapping)}\\
    &= \overline{\mathtt{RL}}(\overline{c}) \mathtt{  (\Tilde{c} = \overline{c})}
\end{align*}
which leads to a contradiction.

Now let's consider the cost functions presented by $\mathtt{ATIRL}_{\psi}(\Tfun_{t})$ and $\overline{\mathtt{ATIRL}}_{\psi}(\Tfun_{t})$.
Since $\mathtt{RL}(\Tilde{c})$ and $\overline{\mathtt{RL}}(\overline{c})$ lead to the same marginal transition distributions, for the inequality we assumed at the beginning of this proof to be true, $\mathtt{ATIRL}_{\psi}(\Tfun_{t})$ and $\overline{\mathtt{ATIRL}}_{\psi}(\Tfun_{t})$ must return different cost functions.

\begin{align*}
    \mathtt{ATIRL}_{\psi}(\Tfun_{t}) &= \argmax_{c \in \mathcal{C}} -\psi(c) + \left(\min_{\p_{g}} \mathbb{E}_{\Tfun_{g}}[c(s, a, s')] \right) - \mathbb{E}_{\Tfun_{t}}[c(s, a, s')] \\
    &=  \argmax_{c \in \mathcal{C}} -\psi(c) + \left(\min_{\p_{g}} \sum_{s, a, s'} \rho_g(s, a, s') c(s, a, s') \right) -\\& \sum_{s, a, s'} \rho_{t}(s, a, s')c(s, a, s') \\
    &=  \argmax_{c \in \mathcal{C}} -\psi(c) + \left(\min_{\rho_{g}} \sum_{s, a, s'} \rho_g(s, a, s') c(s, a, s') \right) -\\& \sum_{s, a, s'} \rho_{t}(s, a, s')c(s, a, s') \\
    &= \overline{\mathtt{ATIRL}}_{\psi}(\Tfun_{t})
\end{align*}

which leads to another contradiction.
Therefore, we can say that $\overline{\rho}_{g} = \rho_{\Tilde{g}}$.
\end{proof}


\subsection{Proof of Lemma \ref{lem:overline_psi_equi}}
\label{app:lem_proof}

 We prove convexity under a particular agent policy $\p$ but across AT policies $\p_{g} \in \pset_{g}$

\begin{lemma} \label{lem:compact}
$\mathcal{P}_{g}$ is compact and convex.
\end{lemma}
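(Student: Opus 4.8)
The plan is to treat $\mathcal{P}_{g}$ as the set of marginal transition distributions realizable by the fixed agent policy $\p$ as the action transformation policy $\p_{g}$ ranges over $\pset_{g}$, and to adapt the classical occupancy-measure machinery underlying \gail{}/\gaifo{}-style results. First I would record two structural facts. Assuming finite $\sset$ and $\aset$, the class $\pset_{g}$ is a product of probability simplices $\prod_{(s,a)} \Delta(\aset)$, hence itself compact and convex. Moreover, for each pair $(s,a)$ the achievable grounded transition $\Tfun_{g}(\cdot|s,a) = \sum_{\Tilde{a}} \Tfun_{sim}(\cdot|s,\Tilde{a})\,\p_{g}(\Tilde{a}|s,a)$ is a convex combination of the simulator rows $\{\Tfun_{sim}(\cdot|s,\Tilde{a})\}_{\Tilde{a}}$, and as $\p_{g}(\cdot|s,a)$ ranges over $\Delta(\aset)$ this sweeps out the entire convex hull of those rows; thus the set of achievable grounded transitions is a product of polytopes, again compact and convex.

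For compactness I would show $\mathcal{P}_{g}$ is the image of the compact set $\pset_{g}$ under the continuous map $\p_{g} \mapsto \rho_{g}$, which factors as $\p_{g} \mapsto \Tfun_{g} \mapsto \rho_{g}$. The first map is linear, hence continuous, in $\p_{g}$. The second is continuous because the discounted state-visitation vector $d$ solves the linear system $d = (1-\D)\rho_0 + \D\, P_{\p,\Tfun_{g}}^{\top} d$, where $P_{\p,\Tfun_{g}}$ is the state-to-state kernel induced by $\p$ and $\Tfun_{g}$; the solution $d = (1-\D)(I - \D P_{\p,\Tfun_{g}}^{\top})^{-1}\rho_0$ exists and depends continuously on $\Tfun_{g}$ since $\D < 1$, and $\rho_{g}$ is then a continuous function of $d$ and $\Tfun_{g}$ through Equation \ref{eqn:rho_g}. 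A continuous image of a compact set is compact, giving compactness of $\mathcal{P}_{g}$.

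Convexity is the main obstacle, since $\rho_{g}$ is \emph{not} linear in $\p_{g}$ (the visitation itself depends on $\Tfun_{g}$), so convex combinations cannot simply be pushed through the map. Instead I would argue directly: given $\rho^{1},\rho^{2} \in \mathcal{P}_{g}$ arising from $\p_g^1, \p_g^2$ and $\lambda \in [0,1]$, set $\bar{\rho} = \lambda\rho^{1} + (1-\lambda)\rho^{2}$ and define a candidate transition by the conditional $\bar{\Tfun}_{g}(s'|s,a) = \bar{\rho}(s,a,s')/\sum_{s''}\bar{\rho}(s,a,s'')$ wherever the denominator is positive (and an arbitrary hull point, e.g.\ $\Tfun_{sim}(\cdot|s,a)$, elsewhere, which is then never visited). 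Since $\bar{\Tfun}_{g}(\cdot|s,a)$ is a weighted average of $\Tfun_{g}^{1}(\cdot|s,a)$ and $\Tfun_{g}^{2}(\cdot|s,a)$, each of which lies in the convex hull of the simulator rows, $\bar{\Tfun}_{g}(\cdot|s,a)$ lies in that hull too, so by the first structural fact some $\p_{g} \in \pset_{g}$ realizes it. It then remains to check that the marginal transition distribution generated by $\p$ and $\bar{\Tfun}_{g}$ is exactly $\bar{\rho}$: because the flow constraints $\sum_{a,s''}\rho(s',a,s'') = (1-\D)\rho_0(s') + \D\sum_{s,a}\rho(s,a,s')$ (from Appendix \ref{app:marginal}) are affine, $\bar{\rho}$ inherits them from $\rho^{1},\rho^{2}$, and together with the fixed $\p$ and extracted $\bar{\Tfun}_{g}$ they determine the visitation uniquely (the flow operator $I - \D P_{\p,\bar{\Tfun}_{g}}^{\top}$ is invertible for $\D < 1$); the uniqueness of the transition associated to a given marginal, already established in Proposition \ref{prop:unique}, then pins $\bar{\rho}$ down as the marginal of this $\p_{g}$, so $\bar{\rho} \in \mathcal{P}_{g}$. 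The delicate points to handle carefully are the degenerate pairs with $\bar{\rho}(s,a)=0$ (where the conditional is defined arbitrarily but harmlessly) and the precise verification that the affine flow constraints together with $\p$ characterize realizable marginals, which is exactly where the occupancy-measure and Proposition \ref{prop:unique} uniqueness argument does the real work.
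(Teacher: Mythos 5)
Your proof is correct, but it takes a genuinely different---and in fact more careful---route than the paper's. The paper proves convexity by induction over the time-indexed distributions $\rho_{g,t}$, asserting that the naive mixture policy $\lambda\p_{at_1} + (1-\lambda)\p_{at_2}$ generates the mixed distribution: its induction step factors $\lambda\rho_{at_1,t}(s)\p_{at_1}(\Tilde{a}|s,a) + (1-\lambda)\rho_{at_2,t}(s)\p_{at_2}(\Tilde{a}|s,a)$ as $\bigl(\lambda\rho_{at_1,t}(s)+(1-\lambda)\rho_{at_2,t}(s)\bigr)\bigl(\lambda \p_{at_1}(\Tilde{a}|s,a)+(1-\lambda)\p_{at_2}(\Tilde{a}|s,a)\bigr)$, an identity that leaves uncancelled cross terms whenever the two action-transformer policies induce different state distributions at time $t$. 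That is exactly the pitfall you flag (mixtures of marginal distributions are realized by visitation-weighted, not naive, policy mixtures), and your construction---extract the conditional $\bar{\Tfun}_{g}$ from $\bar{\rho}$, observe it lies in the per-$(s,a)$ convex hull of simulator rows so some $\p_{g} \in \pset_{g}$ realizes it, then use the affine flow constraints and invertibility of $I - \D P^{\top}$ to verify the pair $(\p, \bar{\Tfun}_{g})$ regenerates exactly $\bar{\rho}$---is the standard rigorous occupancy-measure argument and avoids the issue; the only step you gloss over is the (easy) check that the $(s,a)$-marginal of $\bar{\rho}$ equals $\bar{d}(s)\p(a|s)$, which holds because both constituent marginals share the same fixed $\p$. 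On compactness the contrast is starker: the paper only remarks that the discounted sum is bounded, which gives boundedness but not closedness, whereas your argument---continuous image of the compact set $\pset_{g}$, with continuity via linearity of $\p_{g} \mapsto \Tfun_{g}$ and continuity of $\Tfun_{g} \mapsto (1-\D)(I - \D P^{\top})^{-1}\rho_0$---actually delivers compactness. One small correction to your write-up: the closing appeal to Proposition \ref{prop:unique} is superfluous, since that proposition runs in the opposite direction (from a given marginal to uniqueness of the transition function), while the work of showing your constructed pair produces $\bar{\rho}$ is already done by the flow-equation uniqueness you invoke; both proofs, yours and the paper's, also implicitly assume finite $\sset$ and $\aset$, consistent with the paper's summation notation.
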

\begin{proof}
We first prove convexity of $\rho_{\pset_{g}, t}$ for $\p_{g} \in \pset_{g}$ and $0 \leq t < \infty$, by means of induction.

Base case: $\lambda \rho_{at_1, 0} + (1 - \lambda) \rho_{at_2, 0} \in \rho_{\pset_{g},0}$, for $0 \leq \lambda \leq 1$.

\begin{align*}
    \lambda \rho_{at_1, 0}(s, a, s') + (1 - \lambda) \rho_{at_2, 0}(s, a, s') &= \lambda \rho_0(s)\p(a|s) \sum_{\Tilde{a} \in \aset} \p_{at_1}(\Tilde{a}|s, a) \Tfun_{s}(s'|s, \Tilde{a}) \\&+ (1 - \lambda) \rho_0(s)\p(a|s) \sum_{\Tilde{a} \in \aset} \p_{at_2}(\Tilde{a}|s, a) \Tfun_{s}(s'|s, \Tilde{a}) \\
    &= \rho_0(s)\p(a|s) \sum_{\Tilde{a} \in \aset} \left(\lambda \p_{at_1}(\Tilde{a}|s, a) + (1 - \lambda \p_{at_2}(\Tilde{a}|s, a))\right) \Tfun_{s} (s'|s, \Tilde{a})
\end{align*}

$\pset_{g}$ is convex and hence $\rho_0(s)\p(a|s) \sum_{\Tilde{a} \in \aset} \left(\lambda \p_{at_1}(\Tilde{a}|s, a) + (1 - \lambda \p_{at_2}(\Tilde{a}|s, a))\right) \Tfun_{s} (s'|s, \Tilde{a})$ is a valid distribution, meaning $\rho_{\pset_{g}, 0}$ is convex.

Induction Step: If $\rho_{\pset_{g}, t-1}$ is convex, $\rho_{\pset_{g}, t}$ is convex.

If $\rho_{\pset_{g},t-1}$ is convex, $\lambda \rho_{at_1,t}(s) + (1 - \lambda)\rho_{at_2,t}(s)$ is a valid distribution.
This is true simply by summing the distribution at time $t-1$ over states and actions.


\begin{align*}
    \lambda \rho_{at_1,t}(s, a, s') + (1 - \lambda) \rho_{at_2, t}(s, a, s') &= \lambda \rho_{at_1, t}(s)\p(a|s) \sum_{\Tilde{a} \in \aset} \p_{at_1}(\Tilde{a}|s, a) \Tfun_{s}(s'|s, \Tilde{a}) \\&+ (1 - \lambda) \rho_{at_2,t}(s)\p(a|s) \sum_{\Tilde{a} \in \aset} \p_{at_2}(\Tilde{a}|s, a) \Tfun_{s}(s'|s, \Tilde{a}) \\
    &= \left(\lambda\rho_{at_1,t}(s) + (1 - \lambda) \rho_{at_2, t}(s) \right)\p(a|s)\\& \sum_{\Tilde{a} \in \aset} \left(\lambda \p_{at_1}(\Tilde{a}|s, a) + (1 - \lambda \p_{at_2}(\Tilde{a}|s, a))\right) \Tfun_{s} (s'|s, \Tilde{a})
\end{align*}

$\lambda \rho^\p_{at_1, t}(s) + (1 - \lambda)\rho^\p_{ at_1, t}(s)$ is a valid distribution, and $\pset_{g}$ is convex.
This proves that the transition distribution at each time step is convex.
The normalized discounted sum of convex sets (Equation \ref{eqn:trans_marg}) is also convex.
Since the exponential discounting factor $\D \in [0,1)$, the sum is bounded as well.
\end{proof}

We now prove Lemma \ref{lem:overline_psi_equi}.

\lemoverlinepsiequi*

\begin{proof}[Proof of Lemma \ref{lem:overline_psi_equi}]

Let $\overline{c} = \overline{\mathtt{ATIRL}}(\Tfun_{t})$,  $\overline{\rho}^{g} = \overline{\mathtt{RL}}(\overline{c}) = \overline{\mathtt{RL}} \circ \overline{\mathtt{ATIRL}}(\Tfun_{t})$ and

\begin{equation}
  \begin{aligned}
    \hat{\rho}_{g} = \argmin_{\rho_{g}} \psi^*(\rho_{g} - \rho_{t}) = \argmin_{\rho_{g}} \max_{c} -\psi(c) + \sum_{s, a, s'}& (\rho_{g}(s, a, s') \\&- \rho_{t}(s, a, s'))c(s, a, s')
  \end{aligned}
\end{equation}

where $\psi^*: \mathcal{C}^* \longmapsto \bar{\mathbb{R}}$ is the convex conjugate of $\psi$, defined as $\psi^*(c^*) \defd \sup_{c \in \mathcal{C}} \langle c^*, c\rangle - \psi(c)$.
Applying the above definition to the rightmost term in the above equation gives us the middle term.

We now argue that $\overline{\rho}_{g} = \hat{\rho}_{g}$ which are the two sides of the equation we want to prove.
Let us consider loss function $L: \mathcal{P}_{g} \times \mathbb{R}^{\sset \times \aset \times \sset} \longmapsto \mathbb{R}$ to be

\begin{align}
    L(\rho_{g}, c) = -\psi(c)  + \sum_{s, a, s'}(\rho_{g}(s, a, s') - \rho_{t}(s, a, s'))c(s, a, s')
\end{align}

We can then pose the above formulations as:
\begin{align}
    \hat{\rho}_{g} &\in \argmin_{\rho_{g} \in \mathcal{P}_{g}} \max_c L(\rho_{g}, c) \label{eqn:hat_rho}\\
    \overline{c} &\in \argmax_c \min_{\rho_{g} \in \mathcal{P}_{g}} L(\rho_{g}, c) \label{eqn:bar_c}\\
    \overline{\rho}_{g} &\in \argmin_{\rho_{g} \in \mathcal{P}_{g}} L(\rho_{g}, \overline{c})
\end{align}

$\mathcal{P}_{g}$ is compact and convex (by Lemma \ref{lem:compact}) and $\mathbb{R}^{\sset \times \aset \times \sset}$ is convex. $L(\cdot, c)$ is convex over all $c$ and $L(\rho_{g}, \cdot)$ is concave over all $\rho_{g}$.
Therefore, based on minimax duality:

\begin{align}
     \min_{\rho_{g} \in \mathcal{P}_{g}} \max_c L(\rho_{g}, c) = \max_c \min_{\rho_{g} \in \mathcal{P}_{g}} L(\rho_{g}, c)
\end{align}

From Equations \ref{eqn:hat_rho} and \ref{eqn:bar_c}, $(\hat{\rho}_{g}, \overline{c})$ is a saddle point of $L$,
implying $\hat{\rho}_{g} = \argmin_{\rho_{g} \in \mathcal{P}_{g}} L(\rho_{g}, \overline{c})$ and so $\overline{\rho}_{g} = \hat{\rho}_{g}$.

\end{proof}


\subsection{Proof of Lemma \ref{lem:psi_equi}} \label{app:proof_psi_equi}

\lempsiequivalence*
\begin{proof}
The proof of equivalence here is simply to prove that optimizing over $\p_g$ is the same as optimizing over $\rho_g$.
From Equation \ref{eqn:rho_g} and from the fact that agent policy $\p$ and source environment transition function $\Tfun_{s}$ are fixed, we can say that the only way to optimize $\rho_g$ is to optimize $\p_g$, which leads to the above equivalence.
\end{proof}

\section{Experimental Details}
\label{expt_details}

To collect expert trajectories from the target environment, we rollout the stochastic initial policy trained in sim for 1 million timesteps, on the target environment. This dataset serves as the expert dataset during the imitation learning step of \textsc{garat}. At each GAN iteration, we sample a batch of data from the grounded source environment and expert dataset and update the discriminator. Similarly, we rollout the action transformer policy in its environment and update $\pi_g$. We perform 50 such GAN updates to ground the source environment using \garat{}. The hyperparameters for the PPO algorithm used to update the action transformer policy is provided in Table \ref{ppo_hyperparams}. The hyperparameters used for the TRPO algorithm to update the agent policy can be found in Table \ref{trpo_hyperparams}.

We implemented different \ifo{} algorithms and noticed that there was no significant difference between these backend algorithms in sim-to-real performance. During the discriminator update step in \textsc{gai}f\textsc{o}-reverse\textsc{kl} (\textsc{airl}), \gaifo{} and \gaifo{}-\textsc{w} (\textsc{wail}), we use two regularizers in its loss function - $\mathtt{L2}$ regularization of the discriminator's weights and a gradient penalty (GP) term, with a coefficient of 10. Adding the GP term has been shown to be helpful in stabilizing GAN training \cite{GAN-gp}. 

In our implementation of the \textsc{airl} \cite{fu2018learning} algorithm, we do not use the special form of the discriminator, described in the paper, because our goal is to simply imitate the expert and does not require recovering the reward function as was the objective of that work. 
We instead use the approach \citet{ghasemipour2019divergence} use  with state-only version of \textsc{airl}.

\gat{} uses a smoothing parameter $\alpha$, which we set to $0.95$ as suggested by \citet{hanna2017grounded}.
\textsc{rarl} has a hyperparameter on the maximum action ratio allowed to the adversary, which measures how much the adversary can disrupt the agent's actions.
This hyperparameter is chosen by a coarse grid-search. For each domain, we choose the best result and report the average return over five policies trained with those hyperparameters.
%
%
We used the official implementation of \rarl{} provided by the authors for the MuJoCo environments. However, since their official code does not readily support PyBullet environments, for the Ant and Minitaur domain, we use our own implementation of \rarl{}, which we reimplemented to the best of our ability. 
When training a robust policy using Action space Noise Envelope (\textsc{ane}), we do not know the right amount of noise to inject into the agent's actions. Hence, in our analysis, we perform a sweep across zero mean gaussian noise with multiple standard deviation values and report the highest return achieved in the target environment with the best hyperparameter, averaged across 5 different random seeds. 


\begin{table}
\centering
 \begin{tabular}{||c | c||} 
 \hline
 Name & Value \\ [0.5ex] 
 \hline\hline
 Hidden Layers & 2  \\ 
 \hline
 Hidden layer size & 64  \\
 \hline
 timesteps per batch & 5000  \\ 
 \hline
 max KL constraint & 0.01  \\ 
 \hline
 $\lambda$  & 0.97  \\ 
\hline
 $\gamma$ & 0.995  \\ 
 \hline 
 learning rate & 0.0004  \\ 
 \hline
 cg damping & 0.1  \\  
 \hline
 cg iters & 20  \\  
 \hline
 value function step size & 0.001  \\  
 \hline
 value function iters & 5  \\  
 \hline
\end{tabular}
\vskip 0.2in
\caption{Hyperparameters for the TRPO algorithm used to update the Agent Policy}
\label{trpo_hyperparams}
\end{table}

\begin{table}
\centering
 \begin{tabular}{||c | c||} 
 \hline
 Name & Value \\ [0.5ex] 
 \hline\hline
 Hidden Layers & 2  \\ 
 \hline
 Hidden layer size & 64  \\
 \hline
 nminibatches & 2  \\ 
 \hline
 Num epochs & 1  \\ 
 \hline
 $\lambda$  & 0.95  \\ 
\hline
 $\gamma$ & 0.99  \\ 
 \hline 
 clipping ratio & 0.1  \\ 
 \hline
 time steps & 5000  \\  
 \hline
 learning rate & 0.0003  \\  
 \hline
\end{tabular}
\vskip 0.2in
\caption{Hyperparameters for the PPO algorithm used to update the Action Transformer Policy}
\label{ppo_hyperparams}
\end{table}

\subsection{Modified environments}
\label{environment_modifications}

\begin{table}
\centering
 \begin{tabular}{||c | c | c | c||} 
 \hline
 Environment Name & Property Modified & Default Value & Modified Value \\ [0.5ex] 
 \hline\hline
 InvertedPendulumHeavy & Pendulum mass & 4.89 & 100.0  \\ 
 \hline
 HopperHeavy & Torso Mass & 3.53 & 6.0  \\
 \hline
 HopperHighFriction & Foot Friction & 2.0 & 2.2  \\ 
 \hline
 HalfCheetahHeavy & Total Mass & 14 & 20  \\ 
 \hline
 WalkerHeavy & Torso Mass & 3.534 & 10.0 \\ 
 \hline
 Ant & Gravity & -4.91 & -9.81 \\ 
\hline
 Minitaur \cite{minitaur} & Torque vs. Current & linear & non-linear   \\ 
 \hline 
\end{tabular}
\vskip 0.2in
\caption{Details of the Modified target environments for benchmarking \textsc{garat} against other black-box transfer algorithms.}
\label{modified_envs_data}
\end{table}

We evaluate \textsc{garat} against several algorithms in the domains shown in Figure \ref{fig:results}. Table \ref{modified_envs_data} shows the source environment along with the specific properties of the environment/agent modified. We modified the values such that a policy trained in the sim environment is unable to achieve similar returns in the modified environment. By modifying an environment, we incur the risk that the environment may become too hard for the agent to solve. We ensure this is not the case by training a policy $\pi_{t}$ directly in the target environment and verifying that it solves the task.

\subsection{Source Environment Grounding Experimental Details} \label{app:sim_ground}

\begin{figure}[b]
    \centering
    \includegraphics[width=0.75\textwidth]{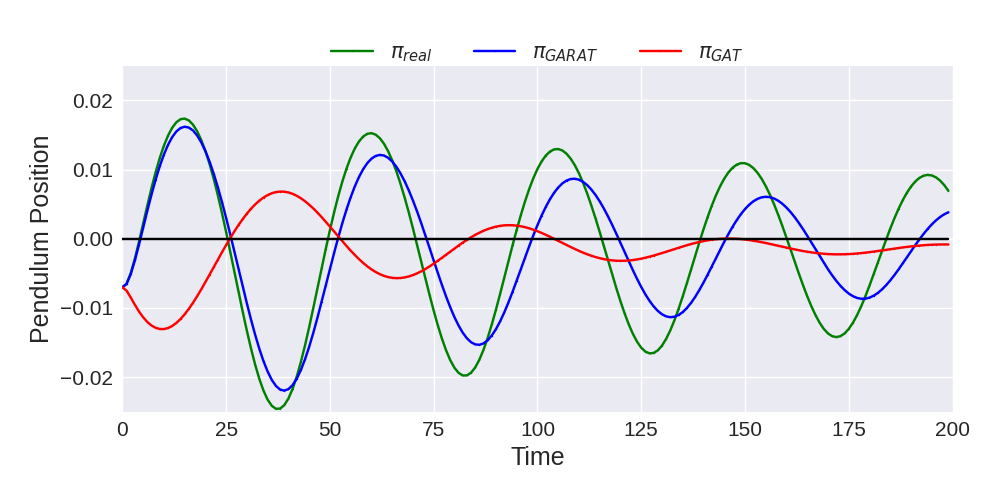}
    \caption{Policies trained in target environment, \gat{}-grounded source environment, and \garat{}-grounded source environment deployed in the target environment from the same starting state}
    \label{fig:real_pol}
\end{figure}

In Section \ref{sec:sim_ground}, we show results which validate our hypothesis that \garat{} learns an action transformation policy which grounds the source environment better than \gat{}.
Here we detail our experiments for Figure \ref{fig:trans_error}.

In Figure \ref{fig:persteptransitionerrors}, we plot the average error in transitions in source environments grounded with \garat{} and \gat{} with different amounts of target environment data, collected by deploying $\pi$ in the target environment.
The per step transition error is calculated by resetting the source environment state to states seen in the target environment,
taking the same action, and then measuring the error in the $\mathtt{L2}$-norm with respect to target environment transitions.
Figure \ref{fig:persteptransitionerrors} shows that with a single trajectory from the target environment, \textsc{garat} learns an action transformation that has similar average error in transitions compared to \textsc{gat} with $100$ trajectories of target environment data to learn from. 

In Figure \ref{fig:trajs}, we compare \garat{} and \gat{} more qualitatively.
We deploy the agent policy $\pi$ from the same start state in the target environment, the source environment, \gat{}-grounded source environment, and \garat{}-grounded source environment.
Their resultant trajectories in one of the domain features (angular position of the pendulum) is plotted in Figure \ref{fig:trajs}.
The trajectories in \garat{}-grounded source environment keeps close to the target environment, which neither the ungrounded source environment nor the \gat{}-grounded source environment manage.
The trajectory in the \gat{}-grounded source environment can be seen close to the one in the target environment initially, but since it disregards the sequential nature of the problem, the compounding errors cause the episode to terminate prematurely.

An additional experiment we conducted was to compare the policies trained in the target environment, \gat{}-grounded source environment and \garat{}-grounded source environment.
This comparison is done by deploying them in the target environment from the same initial state.
As we can see in Figure \ref{fig:real_pol}, the policies trained in the target environment and the \garat{}-grounded source environment behave similarly, while the one trained in the \gat{}-grounded source environment acts differently.
This comparison is another qualitative one.
How well these policies perform in w.r.t. the task at hand is explored in detail in Section  \ref{sec:exp_transfer}.

\end{document}